\documentclass[11pt]{article} 

\usepackage{fullpage}

\usepackage{amsmath,amsthm, amssymb}
\usepackage{latexsym}
\usepackage{graphicx}
\usepackage{ifthen}
\usepackage{subcaption}
\usepackage{multicol}
\usepackage{algorithm,algorithmic}
\usepackage[numbers]{natbib}
\usepackage{mathtools}
\usepackage{dsfont}
\usepackage{nicefrac}

\usepackage{wrapfig}
\usepackage{graphicx}

\usepackage{hyperref}
\hypersetup{colorlinks=true,linkcolor=blue,citecolor=blue,urlcolor=blue}

\newtheorem*{theorem*}{Theorem}
\newtheorem*{definition*}{Definition}

\newtheorem{theorem}{Theorem}
\newtheorem{lemma}{Lemma}
\newtheorem{definition}{Definition}

\usepackage{thm-restate}

\renewcommand{\algorithmiccomment}[1]{\bgroup\hfill\footnotesize~#1\egroup}
%==============================================================================
% Macros.
%==============================================================================

\usepackage{hyperref}
\hypersetup{colorlinks=true,linkcolor=blue,citecolor=blue,urlcolor=blue}

%Theorems

%Macros
%Macros

\newcommand{\conv}{\text{Conv}}
\newcommand{\intt}{\text{int}}
\newcommand{\dom}{\text{dom}}

%Basics
		% New term (set in italics).

%Probability
\DeclareMathOperator{\E}{\mathbb{E}}

%Sets
			% Set (as in \set{1,2,3})

	% Set (as in \setof{x}{x > 0})
		% Complement of ...            

%Other Math

\DeclareMathOperator{\poly}{poly}

\newcommand{\bg}{\text{bg}}

%Numbers
	                % Complex numbers.
                     % Positive integers.
\newcommand{\Q}{\mathbb{Q}}                     % Rationals.
\newcommand{\R}{\mathbb{R}}                     % Reals.
                     % Integers.
                     % Matroids.
                     % Independent Sets.
\newcommand{\U}{\mathcal{U}}

\newcommand{\F}{\mathcal{F}}

\renewcommand{\O}{O} 
\renewcommand{\Q}{\mathcal{Q}}
\newcommand{\A}{\mathcal{A}}

\newcommand{\bx}{\mathbf{x}}

\newcommand{\by}{\mathbf{y}}

\newcommand{\bz}{\mathbf{z}}
\renewcommand{\bg}{\mathbf{g}}
%Headings

\newcommand{\comment}[1]{#1}

%==============================================================================
% Title.
%===================

\title{Parallelization does not Accelerate Convex Optimization: Adaptivity Lower Bounds for Non-smooth Convex Minimization}

\author{Eric Balkanski\\Harvard University \\ \texttt{ericbalkanski@g.harvard.edu}
\and Yaron Singer\\ Harvard University \\ \texttt{yaron@seas.harvard.edu}\\}

\date{}

\begin{document}

\setcounter{page}{0}

\maketitle
\begin{abstract}
In this paper we study the limitations of parallelization in convex optimization.  A convenient approach to study parallelization is through the prism of \emph{adaptivity} which is an information theoretic measure of the parallel runtime of an algorithm \cite{BS18}.  Informally, adaptivity is the number of sequential rounds an algorithm needs to make when it can execute polynomially-many queries in parallel at every round.  For combinatorial optimization with black-box oracle access, the study of adaptivity has recently led to exponential accelerations in parallel runtime and the natural question is whether dramatic accelerations are achievable for convex optimization. 

\comment{ For the problem of minimizing a non-smooth convex function $f:[0,1]^n\to \mathbb{R}$ over the unit Euclidean ball, we give a tight lower bound that shows that even when $\poly(n)$ queries can be executed in parallel, there is no randomized algorithm with $\tilde{o}(n^{1/3})$ rounds of adaptivity that has convergence rate that is better than those achievable with a one-query-per-round algorithm.
A similar lower bound was obtained by Nemirovski~\cite{nemirovski1994parallel}, however that result  holds for the $\ell_{\infty}$-setting instead of $\ell_2$. In addition, we also show a  tight lower bound that holds for Lipschitz and strongly convex functions.}

At the time of writing this manuscript we were not aware of Nemirovski's result.  The construction we use is similar to the one in \cite{nemirovski1994parallel}, though our analysis is different.  Due to the close relationship between this work and \cite{nemirovski1994parallel}, we view the research contribution of this manuscript limited and it should serve as an instructful approach to understanding lower bounds for parallel optimization. 
\end{abstract}

\newpage

% !TeX root = main.tex

\section{Introduction}

In this paper we study the limitations of parallelization in convex optimization.  Since applications of convex optimization are ubiquitous across machine learning and data sets become larger, there is consistent demand for accelerating convex optimization.  For over 40 years computer science has formally studied acceleration of computation via parallelization~\cite{F78,G78,S79}.  Our goal in this paper is to study whether parallelization can generally accelerate convex optimization.

A convenient approach to study parallelization is through the prism of \emph{adaptivity}.   Adaptivity is an information theoretic measure of the parallel runtime of an algorithm, which in many cases also translates to a computational upper bound, up to lower order terms.  Informally, adaptivity is the number of sequential rounds of an algorithm where every round allows for polynomially-many parallel queries.  Adaptivity is studied across a wide variety of areas, including sorting \cite{Val75, Col88, BMW16},  communication complexity \cite{papadimitriou1984communication, duris1984lower, nisan1991rounds},  multi-armed bandits \cite{AAAK17}, sparse recovery \cite{HNC09,IPW11, haupt2009compressive}, and property testing \cite{CG17, chen2017settling}.  In the celebrated PRAM model adaptivity is the \emph{depth} of the computation tree.  More generally, in any parallel computation model, adaptivity lower bounds the runtime of algorithms that make polynomially-many parallel queries.

%For submodular maximization, somewhat surprisingly,    

For combinatorial optimization, the study of adaptivity has recently led to dramatic accelerations in parallel runtime.  For the canonical problem of maximizing a monotone submodular function under a cardinality constraint, a recent line of work initiated in~\cite{BS18} introduces techniques that produce exponential speedups in parallel runtime.  Until very recently the best known adaptivity (and hence best parallel running time) for obtaining constant factor approximations for a submodular function $f : \{0,1\}^n \rightarrow \R$ was linear in $n$. In contrast,~\cite{BS18} and the line of work that follows~\cite{BS18b,BBS18,EN18,BRS18,fahrbach2018submodular,chekuri2018submodular} achieve constant, and even optimal, approximation guarantees in $\O(\log n)$ adaptive steps.     
%This acceleration is also tight in the sense that no algorithm can achieve a constant factor approximation with $\tilde{o}(\log n)$ rounds~\cite{BS18}.  %The natural question therefore
%

%
For convex minimization, in some special cases, parallelization provides non-trivial speedups.  An important example that is well studied in machine learning is when the objective function is decomposable, i.e. when $f(\mathbf{x}) = \sum_{i}f_i(\mathbf{x})$.  In this case parallelism allows computing stochastic subgradients of the convex functions $\{f_{i}\}_i$ simultaneously during iterations of stochastic gradient descent~\cite{DGSX12,DBW12,RRWN11}.  Another special case is when the function is low dimensional.  Recently Duchi et al. show that for $f:[0,1]^n\to\mathbb{R}$ when the number of queries can be \emph{exponential} in the dimension $n$, then parallelization can accelerate minimization when $f$ is either Lipschitz convex or strongly convex and strongly smooth~\cite{duchi2018minimax}.  The natural question is whether algorithms that can execute $\poly(n)$ function evaluations in each iteration can achieve faster convergence rates than those that make a single evaluation in every iteration.
\begin{center}
\emph{
Can parallelization accelerate convex optimization?}
\end{center} 
 
\subsection{Main result}

Our main result is a spoiler.  We show that, in general, parallelization does not accelerate convex optimization.    
In particular,  for the problem of minimizing a Lipschitz and strongly convex function $f:[0,1]^n\to \mathbb{R}$ \comment{over the unit Euclidean ball}, we give a tight lower bound that shows that even when $\poly(n)$ queries can be executed in parallel, there is no randomized algorithm that has convergence rate that is better than those achievable with a one-query-per-round algorithm~\cite{nesterov2013introductory,shamir2013stochastic}.    %This lower bound holds for both deterministic and randomized algorithms.  %Thus, parallelization does not accelerate black box non-smooth convex optimization.  Importantly, 

\begin{restatable}{rThm}{thmmaintwo}
\label{thm:main2}  
For any $G, D > 0$ and $r \in [n]$, there exists a family of convex functions $\F_r$ with $\|\bg\|^2 \leq G^2$ for all subgradients $\bg$ of all $f \in \F_r$, such that for any $r$-adaptive algorithm $\A$, there exists $f \in \F_r$ for which $\A$ returns $\bx_r \in \mathcal{W}$ such that
$$f(\bx_r) - \min_{\bx \in \mathcal{W}}f(\bx) \geq GD \left(\frac{1}{2\sqrt{r+1}} - \frac{(r+1/2)\log n}{\sqrt{n}}\right)$$
with  probability $\omega(1/n)$ over the randomization of $\A$ and
with  domain $\mathcal{W} = \left[-\frac{D}{2\sqrt{n}}, \frac{D}{2\sqrt{n}}\right]^n$ of diameter $\max_{\bx, \bx' \in \mathcal{W}} \|\bx - \bx'\|\comment{_2} = D$. In particular, for any $r \in o(n^{1/3} / \log n)$, 
$$f(\bx_r) - \min_{\bx \in \mathcal{W}}f(\bx) \geq GD \cdot \left(1 - o(1)\right)\cdot\frac{1 }{2\sqrt{r+1}}.$$
\end{restatable}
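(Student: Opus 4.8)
The plan is to adapt Nemirovski's random ``max of linear functions'' construction to the $\ell_2$ box $\mathcal{W}$, with carefully tuned offsets that \emph{gate} the directions one per round, and to analyze it by a round-by-round conditioning argument showing that the algorithm's knowledge grows by at most one direction per round. Concretely, assume $(r+1)\mid n$ (floor issues affect only lower-order terms), put $m=n/(r+1)$, fix a gap $\eta=\tfrac{GD\log n}{\sqrt n}$ and offsets $b_i=-(i-1)\eta$ (so $0=b_1\ge\dots\ge b_{r+1}=-r\eta$). Each $f\in\F_r$ is indexed by a partition of $[n]$ into equal blocks $S_1,\dots,S_{r+1}$ and a sign vector $\epsilon\in\{-1,1\}^n$; with $\bz_i=\tfrac{G}{\sqrt m}\sum_{j\in S_i}\epsilon_j\mathbf{e}_j$ (pairwise orthogonal, each of norm $G$) set $f(\bx)=\max_{i\in[r+1]}(\langle\bz_i,\bx\rangle+b_i)$, so every subgradient is a convex combination of the $\bz_i$ and $\|\bg\|^2\le G^2$. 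The point $\bx^\star$ with $\bx^\star_j=-\tfrac{D}{2\sqrt n}\epsilon_j$ lies in $\mathcal{W}$ and gives $\langle\bz_i,\bx^\star\rangle=-\tfrac{G}{\sqrt m}\cdot m\cdot\tfrac{D}{2\sqrt n}=-\tfrac{GD}{2\sqrt{r+1}}$ for every $i$, hence $\min_{\bx\in\mathcal{W}}f(\bx)\le f(\bx^\star)=-\tfrac{GD}{2\sqrt{r+1}}$ (using $\max_i b_i=0$). By a standard averaging (Yao) argument it then suffices to analyze a uniformly random $f\in\F_r$ against a fixed randomized $r$-adaptive $\A$.

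The heart of the argument is a round-by-round ``hiding'' claim. Set $\tau=\eta/2=\tfrac{GD\log n}{2\sqrt n}$ and call block $j$ \emph{inactive at} $\bx\in\mathcal{W}$ if $|\langle\bz_j,\bx\rangle|\le\tau$. For $\bx\in\mathcal{W}$ we have $\sum_{l\in S_j}\bx_l^2\le m(\tfrac{D}{2\sqrt n})^2=\tfrac{D^2}{4(r+1)}$, so the Rademacher sum $\langle\bz_j,\bx\rangle=\tfrac{G}{\sqrt m}\sum_{l\in S_j}\epsilon_l\bx_l$ is sub-Gaussian with parameter at most $\tfrac{G}{\sqrt m}\cdot\tfrac{D}{2\sqrt{r+1}}=\tfrac{GD}{2\sqrt n}$, whence $\Pr[j\text{ active at }\bx]\le 2\exp(-\tfrac12\log^2 n)=n^{-\Omega(\log n)}$ (and the same bound holds conditionally on $\bz_1,\dots,\bz_{j-1}$, since the rest of the partition and of $\epsilon$ stay uniform). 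I would then prove by induction on $k$ that on a good event $\mathcal{E}$ the transcript through round $k$ — hence the queries of round $k{+}1$ — is a deterministic function of $\bz_1,\dots,\bz_k$ and $\A$'s coins alone: conditioned on $\mathcal{E}$ through round $k-1$, the $\poly(n)$ queries of round $k$ are independent of $\bz_k,\dots,\bz_{r+1}$, so a union bound gives that, except with probability $n^{-\Omega(\log n)}$, all of $\bz_k,\dots,\bz_{r+1}$ are inactive at every one of them; and when this holds, for any such $\bx$ and any $j>k$,
\[
\langle\bz_j,\bx\rangle+b_j\;\le\;\tau+b_j\;\le\;\tau+b_{k+1}\;=\;b_k-\tau\;\le\;\langle\bz_k,\bx\rangle+b_k\;\le\;\max_i(\langle\bz_i,\bx\rangle+b_i),
\]
so block $j$ never attains the maximum and the round reveals at most $\bz_k$. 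Unioning over the $r$ rounds, $\Pr[\neg\mathcal{E}]\le n^{-\Omega(\log n)}$, and on $\mathcal{E}$ the returned point $\bx_r\in\mathcal{W}$ depends only on $\bz_1,\dots,\bz_r$ and $\A$'s coins, hence is independent of $\bz_{r+1}$.

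To finish, on $\mathcal{E}$ the same sub-Gaussian estimate (now conditioning on $\bz_1,\dots,\bz_r$) gives $\langle\bz_{r+1},\bx_r\rangle\ge-\tau$ except with probability $n^{-\Omega(\log n)}$, and then
\[
f(\bx_r)-\min_{\bx\in\mathcal{W}}f(\bx)\;\ge\;(\langle\bz_{r+1},\bx_r\rangle+b_{r+1})-\Big(-\tfrac{GD}{2\sqrt{r+1}}\Big)\;\ge\;\tfrac{GD}{2\sqrt{r+1}}-r\eta-\tau\;=\;GD\left(\tfrac{1}{2\sqrt{r+1}}-\tfrac{(r+1/2)\log n}{\sqrt n}\right),
\]
which holds with probability $1-n^{-\Omega(\log n)}=\omega(1/n)$ over the joint randomness; averaging over $f$ then fixes a hard instance. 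The ``in particular'' clause follows since for $r\in o(n^{1/3}/\log n)$ one has $\tfrac{(r+1/2)\log n}{\sqrt n}\cdot 2\sqrt{r+1}=O(r^{3/2}\log n/\sqrt n)=o(1)$, so the subtracted term is $o(1)\cdot\tfrac{1}{2\sqrt{r+1}}$.

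The routine parts above are the sub-Gaussian tail bounds, the exact evaluation of $\min_{\mathcal{W}}f$ at the corner $\bx^\star$, and the final arithmetic. The main obstacle — and the only place the parallel (rather than sequential) nature of $\A$ enters — is the hiding claim: one must rule out that a batch of $\poly(n)$ adaptively chosen queries in a single round extracts more than one new block (for instance a query that drives all already-revealed blocks strongly negative could a priori make a far-future block active), and the fix is precisely the decreasing-offset gating with gaps $\eta>2\tau$ exceeding the concentration scale, together with arranging the conditioning so that fresh independence over $\bz_k,\dots,\bz_{r+1}$ is available in each round.
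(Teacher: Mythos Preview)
Your proposal is correct and follows essentially the same approach as the paper: the identical hard family (max of block-supported affine functions with decreasing offsets gapped by the concentration scale), the same round-by-round hiding argument via Hoeffding, and the same final arithmetic yielding exactly $GD\bigl(\tfrac{1}{2\sqrt{r+1}}-\tfrac{(r+1/2)\log n}{\sqrt n}\bigr)$. The paper packages the conditioning subtlety you flag at the end via an explicit ``obfuscating oracle'' (a shadow oracle that answers $\max_{j\le k}$ at round $k$, then is matched to a real $f_\by$ by a derandomization lemma) and obtains the gap via a pair $f_\by,f_\bz$ with sign-flipped last block rather than your direct concentration bound on $\langle\bz_{r+1},\bx_r\rangle$, but these are presentational choices, not a different argument.
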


This  $\Omega(1/\sqrt{r})$ convergence rate matches (up to lower order terms) the  convergence rate  of standard sequential algorithms for Lipschitz convex functions~\cite{nesterov2013introductory,shamir2013stochastic}. \comment{A similar result by Nemirovski \cite{nemirovski1994parallel} implies a bound with the same convergence rate, but for the $\ell_\infty$ setting instead of $\ell_2$. We also obtain the following convergence rate for Lipschitz and strongly convex functions.}

\begin{restatable}{rThm}{thmmainone}
\label{thm:main1}  
For any $\lambda, G > 0$ and $r \in [n]$, there exists a family of $\lambda$-strongly convex functions $\F_r$, with $\|\bg\|^2 \leq G^2$ for all subgradients $\bg$ of all $f \in \F_r$ over domain $\mathcal{W}$, such that for any $r$-adaptive algorithm $\A$, there exists $f \in \F_r$ for which $\A$ returns $\bx_r \in \mathcal{W}$ such that 
$$f(\bx_r) - \min_{\bx \in \mathcal{W}}f(\bx)\geq \frac{G^2}{\lambda}\left(\frac{1 }{8(r+1)} - \sqrt{\frac{r+ 1}{n}}\frac{\log n}{2}\right)$$ with  probability $\omega(1/n)$ over the randomization of $\A$ and with the box $\left[-\frac{G}{2\lambda\sqrt{n(r+1)}}, \frac{G}{2\lambda\sqrt{n(r+1)}}\right]^n$ as domain $\mathcal{W}$. In particular, for any $r \in o(n^{1/3} / \log n)$, 
$$f(\bx_r) - \min_{\bx \in \mathcal{W}}f(\bx) \geq \frac{G^2}{\lambda} \cdot (1-o(1)) \cdot \left(\frac{1 }{8(r+1)}\right).$$
\end{restatable}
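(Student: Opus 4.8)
The plan is to obtain Theorem~\ref{thm:main1} from Theorem~\ref{thm:main2} by the standard regularization reduction: a hard $\lambda$-strongly convex instance is a hard Lipschitz instance plus the quadratic $\tfrac{\lambda}{2}\|\bx\|^2$. Concretely, I would invoke Theorem~\ref{thm:main2} with parameters $G' := G/2$ and $D' := \tfrac{G}{\lambda\sqrt{r+1}}$, so that the domain $\mathcal W=[-\tfrac{D'}{2\sqrt n},\tfrac{D'}{2\sqrt n}]^n$ it produces is exactly the box $[-\tfrac{G}{2\lambda\sqrt{n(r+1)}},\tfrac{G}{2\lambda\sqrt{n(r+1)}}]^n$ claimed in Theorem~\ref{thm:main1}. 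Letting $\widetilde{\F}_r$ be the Lipschitz family guaranteed by Theorem~\ref{thm:main2} for these parameters, I would define $\F_r := \{\, g_f \,:\, f \in \widetilde{\F}_r \,\}$ with $g_f(\bx) := f(\bx) + \tfrac{\lambda}{2}\|\bx\|^2$.

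First I would verify the structural requirements. Each $g_f$ is $\lambda$-strongly convex, being a convex function plus $\tfrac{\lambda}{2}\|\bx\|^2$. Every subgradient of $g_f$ at $\bx\in\mathcal W$ equals $\bg + \lambda\bx$ for a subgradient $\bg$ of $f$; since $\mathcal W$ is centered at the origin we have $\|\bx\|\le D'/2$, hence $\|\bg+\lambda\bx\|\le \|\bg\|+\lambda\|\bx\| \le G' + \lambda D'/2 = \tfrac G2 + \tfrac{G}{2\sqrt{r+1}} \le G$ since $r\ge 1$. The domain is the box just identified, which finishes the structural check; I also record that $\max_{\bx\in\mathcal W}\|\bx\|^2 = n\,(D'/(2\sqrt n))^2 = (D')^2/4$, used below.

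The reduction itself is then immediate. Fix any $r$-adaptive algorithm $\A$ over $\F_r$. A value/subgradient oracle for $g_f$ is simulated from one for $f$ by returning $f(\bx)+\tfrac{\lambda}{2}\|\bx\|^2$ and $\bg+\lambda\bx$, so running $\A$ against this simulated oracle defines an $r$-adaptive algorithm $\A'$ over $\widetilde{\F}_r$ that issues the same queries in the same rounds and returns the same $\bx_r\in\mathcal W$. By Theorem~\ref{thm:main2} there is $f\in\widetilde{\F}_r$ such that, with probability $\omega(1/n)$, the returned $\bx_r$ satisfies $f(\bx_r)-\min_{\mathcal W} f \ge G'D'\big(\tfrac{1}{2\sqrt{r+1}} - \tfrac{(r+1/2)\log n}{\sqrt n}\big)$. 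On that event, using $\tfrac\lambda2\|\bx_r\|^2\ge 0$ together with $\min_{\mathcal W} g_f \le \min_{\mathcal W} f + \tfrac\lambda2\max_{\mathcal W}\|\bx\|^2 = \min_{\mathcal W} f + \tfrac{\lambda (D')^2}{8}$, we obtain
\[
g_f(\bx_r)-\min_{\mathcal W} g_f \ \ge\ \big(f(\bx_r)-\min_{\mathcal W} f\big) - \tfrac{\lambda (D')^2}{8}.
\]
Substituting $G'D' = \tfrac{G^2}{2\lambda\sqrt{r+1}}$ and $\tfrac{\lambda (D')^2}{8} = \tfrac{G^2}{8\lambda(r+1)}$, and bounding $\tfrac{r+1/2}{\sqrt{(r+1)n}} \le \sqrt{\tfrac{r+1}{n}}$, the right-hand side is at least $\tfrac{G^2}{\lambda}\big(\tfrac{1}{8(r+1)} - \sqrt{\tfrac{r+1}{n}}\tfrac{\log n}{2}\big)$, which is exactly the bound in Theorem~\ref{thm:main1}; the ``in particular'' clause follows since for $r\in o(n^{1/3}/\log n)$ the ratio of the correction term to $\tfrac{G^2}{8\lambda(r+1)}$ is $\Theta\big((r+1)^{3/2}\log n/\sqrt n\big)=o(1)$.

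All the genuine work sits inside Theorem~\ref{thm:main2}; here the only thing to get right is the accounting — splitting the gradient budget as $G'=G/2$ so that the regularized functions stay $G$-Lipschitz, which conveniently makes the regularization penalty $\tfrac{\lambda(D')^2}{8}$ equal to exactly half of the Theorem~\ref{thm:main2} lower bound $G'D'/(2\sqrt{r+1}) = \tfrac{G^2}{4\lambda(r+1)}$, leaving the stated $\tfrac{G^2}{8\lambda(r+1)}$ main term. So I do not expect any real obstacle beyond Theorem~\ref{thm:main2} itself, whose proof (the random ``wall'' construction together with the anti-concentration estimate showing that $\poly(n)$ parallel queries cannot correlate with the directions not yet revealed) is the substantive part of the paper.
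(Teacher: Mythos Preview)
Your proposal is correct. The construction is the same as the paper's: add $\tfrac{\lambda}{2}\|\bx\|^2$ to the Lipschitz hard family with the gradient budget halved ($G'=G/2$) and diameter $D'=G/(\lambda\sqrt{r+1})$, which is exactly the paper's $\F_r^\lambda$. The accounting you do---$G'D'/(2\sqrt{r+1})=\tfrac{G^2}{4\lambda(r+1)}$, regularization cost $\tfrac{\lambda (D')^2}{8}=\tfrac{G^2}{8\lambda(r+1)}$, and $\tfrac{r+1/2}{\sqrt{(r+1)n}}\le\sqrt{(r+1)/n}$---matches the computation in the paper's Lemma~\ref{lem:gapC2}.

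The only difference is organizational. You invoke Theorem~\ref{thm:main2} as a black box and transfer the suboptimality gap via $g_f(\bx_r)-\min g_f\ge (f(\bx_r)-\min f)-\tfrac{\lambda(D')^2}{8}$, whereas the paper re-opens the construction and re-applies the meta Theorem~\ref{thm:meta}: it observes that indistinguishability is inherited verbatim (the quadratic is independent of $\by$), proves a fresh gap lemma (Lemma~\ref{lem:gapC2}) for $f^\lambda_\by$ directly, and bounds the subgradients by an explicit expansion (Lemma~\ref{lem:subgradientsStrongly}) rather than your triangle inequality $\|\bg+\lambda\bx\|\le G'+\lambda D'/2\le G$. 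Your route is slightly more modular and avoids reproving indistinguishability and gap; the paper's route is marginally more self-contained and yields the subgradient bound without the slack of the triangle inequality. Neither buys a better constant here.
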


Again, this $\Omega(1/r)$ convergence rate matches (up to lower order terms) the convergence rate of standard sequential algorithms with  one query per round for $\lambda$-strongly convex functions~\cite{nesterov2013introductory,shamir2013stochastic}.

\paragraph{Some remarks.}  %As the statements It holds for optimizing Lipschitz non-smooth convex functions with bounded subgradients, $\|\bg\|^2 \leq G^2$ for all subgradients $\bg$, with $G$ that controls the Lipschitz continuity of the functions,  and over domain $\mathcal{W}$ that is the $n$ dimensional box of diameter $D$. 
The lower bounds hold for both deterministic and randomized algorithms for optimizing non-stochastic functions $f$.  The lower bounds thus trivially hold for the stochastic case as well, since it is strictly harder.  Similarly, these lower bounds also hold for decomposable convex functions since a decomposable function composed of a single function is a special case.   
The lower bounds hold with high probability over the randomization of the algorithm, and trivially also hold in expectation with an additional $1- o(1)$ multiplicative term.   Finally, the lower bounds hold for both zeroth and first-order oracles. We present these lower bounds for zeroth-order oracles, which extend to first-order oracles since  first-order oracles can be obtained from zeroth-order oracles when poly-many queries are allowed per round by querying a small ball around the point of interest in a single round. 
%In Theorem~\ref{thm:strongly}, we give a simple extension of this lower bound to non-smooth strongly convex functions. 

%We note that we focus on non-smooth convex optimization  since there exist algorithms for optimizing smooth convex functions that have fast convergence rates even in the single query per round, i.e. non-parallel, setting \cite{nesterov2013introductory}.
\subsection{Technical overview}

We introduce a new framework to argue about the information theoretic limitations of algorithms when given access to polynomially-many queries in each round. \comment{Our construction is similar to the construction from \cite{nemirovski1994parallel}.}

We begin by describing a simple class of functions that can be used to show the lower bound.  We then reduce the problem to showing that the class of functions we define respects two conditions: \emph{indistinguishability} and \emph{gap}.  The main technical challenge is in proving that the family of functions we construct satisfies these indistinguishability and gap conditions.

%giving two simple sufficient conditions, called the indistinguishability and gap conditions, on a class of functions to obtain a lower bound. The construction of the hard family of functions is relatively simple, 

Satisfying these two conditions requires finding, for any algorithm $\A$, two functions $f_\by$ and $f_\bz$ in the family of functions which have equal value over queries by $\A$ but different optima. The main difficulty is that we pick  $f_\by$ and $f_\bz$ depending on the queries of the algorithm $\A$, but $\A$ can learn partial information about $f_\by$ and $f_\bz$ from those queries. Thus, the queries of the algorithm are dependent on $f_\by$ and $f_\bz$, which creates a cycle of dependence. 

The main conceptual part of the analysis is in finding such $f_\by$ and $f_\bz$. We do so using an oracle $f_\A$ which is defined \emph{adaptively}  and over multiple rounds as it receives queries from an algorithm $\A$. We call such an oracle which is dependent on an algorithm $\A$ an \emph{obfuscating oracle}. This construction contains multiple subtleties due to the complex dependencies between $f_\A$ and $\A$.  Showing the existence of an obfuscating oracle with the desired properties is also non-trivial. It requires a probabilistic argument that derandomizes an algorithm by showing that it is sufficient to argue about properties of  a deterministic query by an algorithm to a random function instead of random queries to a deterministic function.

%A main challenge is to formally argue about information-theoretic hardness for adaptive algorithms. 

\subsection{Related work}

 The study of the hardness of convex optimization was initiated in the seminal work of \cite{nemirovskii1983problem} which introduced the standard model for lower bounds in convex optimization (see 
\cite{nesterov2013introductory, bubeck2015convex} for a simplified presentation). In that model, there is a black-box oracle for a convex function $f$ such that the algorithm queries points $\bx$ and receives answers $f(\bx)$ from the oracle. 
There is a rich line of work on information theoretic lower bounds for the number of sequential queries needed for convex optimization in the setting where the oracle $f$ is stochastic, e.g. \cite{raginsky2009information,raginsky2011information}, or non-stochastic, e.g. \cite{agarwal2009information,woodworth2016tight,braun2017lower}. In this paper, we  consider the basic case where the oracle is not stochastic and note that any lower bound in the non-stochastic setting trivially extends to the stochastic setting.
Since the standard model for lower bounds in convex optimization uses a black-box oracle access setting, adaptivity is well-suited for the study of lower bounds for parallel convex optimization.

%in that model, e.g. \cite{raginsky2009information,agarwal2009information,raginsky2011information,woodworth2016tight,braun2017lower}.  These results can be divided as being either in the setting where the oracle $f$ is stochastic, e.g. \cite{raginsky2009information,raginsky2011information}, or non-stochastic, e.g. \cite{agarwal2009information,woodworth2016tight,braun2017lower}. In this paper, we  consider the simpler case where the oracle returns an exact answer and  is not stochastic.

 %A long line of work has obtained  information theoretic lower bounds on the number of sequential queries needed for convex optimization in that model, e.g. \cite{raginsky2009information,agarwal2009information,raginsky2011information,woodworth2016tight,braun2017lower}.  These results can be divided as being either in the setting where the oracle $f$ is stochastic, e.g. \cite{raginsky2009information,raginsky2011information}, or non-stochastic, e.g. \cite{agarwal2009information,woodworth2016tight,braun2017lower}. In this paper, we  consider the simpler case where the oracle returns an exact answer and  is not stochastic.
 
\comment{As previously mentioned, a similar result by Nemirovski \cite{nemirovski1994parallel} implies a bound with the same convergence rate, but for the $\ell_\infty$ setting instead of $\ell_2$.}
Very recent work has also obtained lower bounds on the convergence rates of adaptive algorithms for convex optimization \cite{smith2017interaction, woodworth2018graph, duchi2018minimax}. The exact settings vary, but the high level goal is the same as ours, which is to derive convergence rates for algorithms which allow multiple parallel queries in every round. We give lower bounds which improve over these previous lower bounds. In particular, the convergence rates in \cite{duchi2018minimax}  exponentially decrease in the dimension $n$. The lower bounds in \cite{woodworth2018graph} have  a $1/\sqrt{m}$ dependence term where $m$ is the number of queries, while our lower bounds are independent of $m$ and we only assume that the number of queries is at most $\poly(n)$. Motivated by applications to local differential privacy, \cite{smith2017interaction} obtained lower bounds on the convergence rate that have an exponential dependence on the number of rounds $r$, while we obtain the optimal $1/\sqrt{r}$ and $1/r$ rates. Also related to adaptivity for convex optimization is the work of \cite{perchet2016batched}, which studies adaptivity in a bandit setting and obtains regret bounds for strategies that can be updated only a limited number of times.

 %can be \emph{exponential} in the dimension $n$, then parallelization can accelerate minimization of either Lipschitz convex functions or strongly convex and strongly smooth functions~\cite{duchi2018minimax}
 
%Decomposable convex functions $f = \frac{1}{m}\sum_{i \in [m]} f_i$ are well-studied and lead to improvements over non-decomposable functions. 
%In the case of a decomposable convex function $f = \frac{1}{m}\sum_{i \in [m]} f_i$, multiple calls to different functions $f_i$ are easily parallelizable. Recently, lower bounds for the query complexity of decomposable convex functions were established  \cite{woodworth2016tight}.  Since this is the first work to study adaptivity lower bounds for convex optimization, we study the simple case of non-decomposable functions, from a theoretical standoint, this is  similar as in the lower bounds in~\cite{nesterov2013introductory,raginsky2009information,agarwal2009information,raginsky2011information,braun2017lower}.  Indeed, in many applications the convex functions are not decomposable \cite{kar2014online, hazan2013learning,narasimhan2014statistical}.

Non-adaptivity, i.e. $1$-adaptive algorithms, has been studied for convex optimization in~\cite{BS17a}, where it has been shown that there is no algorithm that can obtain even a constant error using fewer than exponentially-many (in the dimension $n$) samples drawn from any distribution. This hardness result for non-adaptive algorithms for convex optimization motivated our study of algorithms with $r > 1$ rounds of adaptivity.   More generally, non-adaptive algorithms have also been studied for combinatorial optimization to study the power and limitations of algorithms whose input is learned from observational data~\cite{balkanski2017limitations,BS17,BIS17,BRS16,RBGS18}.

%As previously mentioned, adaptivity is  studied across a wide variety of areas that include sorting \cite{Val75, Col88, BMW16},  communication complexity \cite{papadimitriou1984communication, duris1984lower, nisan1991rounds},  multi-armed bandits \cite{AAAK17}, sparse recovery \cite{HNC09,IPW11, haupt2009compressive},  property testing \cite{CG17, chen2017settling}, and submodular optimization~\cite{BS18,BS18b,EN18,BRS18}.

%\subsection{Adaptivity}
%
%Informally, the \emph{adaptivity} of an algorithm is the number of sequential rounds of queries it makes, where every round allows for $\poly(n)$ parallel queries where $n$ is the dimension of the problem. 
%
%
%\begin{definition*} Given an oracle $f$, an algorithm  is \textbf{$r$-adaptive} if every query $\bx$ to the oracle  occurs at a \emph{round} $i \in [r]$ such that $\bx$ is independent of the answers $f(\by)$ to all other queries $\by$ at round $i$.
%\end{definition*}
%
%We note that the definition is stated for zero-order oracles $f$, but we emphasize that it is equivalent to assuming first-order oracles since $\poly(n)$ queries to $f$ are allowed in every round, and thus  subgradients can be obtained in one round of querying $f$.

\subsection{Adaptivity} 
The \emph{adaptivity} of an algorithm is the number of sequential rounds of queries it makes, where every round allows for $\poly(n)$ parallel queries where $n$ is the dimension of the problem.

\begin{definition*} Given an oracle $f$, an algorithm  is \textbf{$r$-adaptive} if every query $\bx$ to the oracle  occurs at a \emph{round} $i \in [r]$ such that $\bx$ is independent of the answers $f(\by)$ to all other queries $\by$ at round $i$.
\end{definition*}

We note that the definition is stated for zero-order oracles (given $\mathbf{x}$ the oracle returns $f(\mathbf{x})$), but as previously mentioned, we emphasize that it is equivalent to assuming first-order oracle access since $\poly(n)$ queries to $f$ are allowed in every round, and thus  subgradients can be obtained in one round of querying $f$.

\subsection{Paper organization}

In Section~\ref{sec:construction}, we construct the family of Lipschitz convex functions that is hard to optimize in $r$ adaptive rounds of queries and present two simple sufficient conditions, called the indistinguishability and gap conditions, on a class of functions to obtain the lower bound. In Section~\ref{sec:obfuscating}, we present the obfuscating oracle, which is used to find two functions in the hard family of functions that satisfy these two conditions. We show that these two functions satisfy the indistinguishability and gap conditions in Section~\ref{sec:analysis}. Finally, in Section~\ref{sec:stronglyconvex}, we extend the construction and the lower bound to $\lambda$-strongly convex functions.

% !TeX root = mainEricIgnore.tex

\section{The Construction of the Hard Family of Functions}
\label{sec:construction}

In this section, we construct the family of functions that cannot be optimized in $r$ rounds of queries. We then describe two simple conditions that together are sufficient for showing the hardness of optimizing a class of functions in $r$ rounds.

\subsection{The hard family of functions}
\label{sec:family}

We give the construction of the family of Lipschitz convex functions $\F_r$    for the lower bound for $r$-adaptive algorithms. In Section~\ref{sec:stronglyconvex}, we extend this construction to obtain a family of functions which is Lipschitz and $\lambda$-strongly convex. The  functions $f_\by \in \F_r$ are  parameterized by a binary vector $\by \in \{-1,  1\}^{n}$ and optimized over domain $\mathcal{W} = \left[-D/(2\sqrt{n}), D/(2\sqrt{n})\right]^n$, which is the box of diameter $D$. For a vector $\bx \in \mathcal{W}$ (and similarly for $\by \in \{-1,  1\}^{n}$), we often break $\bx$ into $r + 1$ blocks  $\bx_1, \ldots, \bx_{r+1}$ of $n/(r+1)$ consecutive entries of $\bx$, where  $$\bx_i := \bx\left[(i-1)\frac{n}{r+1}+1: i\frac{n}{r+1}\right] \in \R^{n/(r+1)}.$$ The functions are in terms of some $\gamma > 0$ which we later define.  Formally, the function $f_{\by}$ is defined  as 
$$f_{\by}(\bx) := \gamma \cdot \max_{i \in [r+1]} \left(\bx^{\intercal}_i \by_i -  2 i  \epsilon\right)$$
 where $ \epsilon := \frac{D \log n}{2\sqrt{r+1}}$.  The family of functions for which we show a lower bound for $r$-adaptive algorithms is 
$$\F_r := \{f_{\by} : \by \in \{-1,  1\}^{n}\}.$$

We discuss some informal intuition for the hardness of optimizing $\F_r$ in $r$-adaptive rounds before giving the formal argument. 
The main idea behind these functions is  that an algorithm needs to learn all $\by_i$, $i \in [r+1]$, to optimize $f_{\by}$ within good accuracy,  but that it cannot learn $\by_i$ before round $i$. The reason is that for a query $\bx$ by an algorithm at any round $j < i$, if $\bx^{\intercal}_j \by_j$ and  $\bx^{\intercal}_i \by_i$ concentrate, i.e., $|\bx^{\intercal}_j \by_j| < \epsilon$ and $|\bx^{\intercal}_i \by_i | < \epsilon$, then $$\bx^{\intercal}_j \by_j -  2 j  \epsilon > \bx^{\intercal}_i \by_i -  2 i  \epsilon.$$ Note that by the definition of $f_\by$, conditioned on  $\bx^{\intercal}_j \by_j -  2 j  \epsilon > \bx^{\intercal}_i \by_i -  2 i  \epsilon$, the value of $f_\by$ is independent of $\by_i$ and the algorithm does not learn $\by_i$. Informally, if an algorithm has not yet learn $\by_i$ at some round $j$, $\bx^{\intercal}_i \by_i$ is likely to concentrate for the queries $\bx$ by this algorithm at round $j$.

%\begin{figure}
%\centering
%\includegraphics[scale=0.6]{values}
%\caption{The values of the $\bx^{\intercal} \by_j -  2 j  \epsilon$ terms in $f_{\by}(\bx)$ for $j \in [r+1]$, assuming $\bx^{\intercal} \by_j$ concentrates.}
%\label{fig:blocks}
%\end{figure}

Observe that a minimizer for $f_{\by}$ over $\mathcal{W}$ is 
$\bx^{\star}$ such that 
$$\bx^{\star}_j = \begin{cases} \frac{D}{2 \sqrt{n}} & \text{ if }y_j = -1 \\
- \frac{D}{ 2 \sqrt{n}} & \text{ if }y_j = 1 \end{cases}$$ 

 If an algorithm cannot learn $\by_{r+1}$ in $r$-adaptive rounds, then $\bx^{\intercal}_{r+1} \by_{r+1}$ is likely to concentrate for the solution $\bx$ returned by the algorithm. If $\bx^{\intercal}_{r+1} \by_{r+1}$ concentrates, then $\bx$ is a bad solution compared to $\bx^{\star}$.

\subsection{Two sufficient conditions for hardness}

We reduce the analysis of the lower bound for $\F_r$ to showing that for any algorithm $\A$, there exists $f_\by, f_\bz \in \F_r$ that satisfy two simple conditions.  Informally, the first condition, called indistinguishability, states that the functions $f_\by$ and $f_\bz$ have, with high probability, equal value over all queries of the algorithm $\A$. On the other hand, the second condition, called $\alpha$-gap, states that there is no solution which simultaneously $\alpha$-approximates the optimal solutions of both functions $f_\by$ and $f_\bz$. It is easy to show that if a class of functions $\F$ contains two such functions $f_\by$ and $f_\bz$ for any algorithm $\A$, then $\F$ is hard to optimize since $f_\by$ and $f_\bz$ need to be distinguished for an algorithm $\A$ to have good performance. 

\begin{theorem}
\label{thm:meta}
 Let $\A$ be some algorithm for a class of functions $\F$. Assume there exists $f_\by, f_\bz \in \F$ with the properties of
\begin{itemize}
\item \textbf{Indistinguishability:} for all rounds $i \leq r$, let $\Q_i$ be the queries at round $i$ by $\A$, which are adaptive to the answers $f_\by(\bx)$ by $f_\by$ to queries $\bx$ by $\A$ to $f_\by$ at rounds $j < i$. Then, with probability $1 - n^{-\omega(1)}$ over $\A$, for all $\bx \in \Q_i$,    $$f_\by(\bx) =  f_\bz(\bx).$$
\item \textbf{Gap:} Minimizers for $f_\by$ and $f_\bz$ have equal value, i.e., 
$$\min_{\bx \in \mathcal{W}} f_\by(\bx) = \min_{\bx \in \mathcal{W}} f_\bz(\bx),$$ but for all $\bx \in \mathcal{W}$:
$$\max(f_\by(\bx), f_\bz(\bx))  - \min_{\bx \in \mathcal{W}} f_\by(\bx) > \alpha.$$
\end{itemize}
Then,  there is no $r$-adaptive algorithm that finds for all $f \in \F$, with probability strictly larger than $\omega(1/n)$ over the randomization of the algorithm, a solution $\bx^r$ s.t. $f(\bx^r) - \min_{\bx \in \mathcal{W}} f(\bx) \leq \alpha$.
\end{theorem}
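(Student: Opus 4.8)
The plan is to prove the contrapositive: suppose toward a contradiction that there is an $r$-adaptive algorithm $\A$ that, for every $f \in \F$, returns $\bx^r$ with $f(\bx^r) - \min_{\bx \in \mathcal{W}} f(\bx) \le \alpha$ with probability strictly larger than some $p = \omega(1/n)$. By hypothesis, there exist $f_\by, f_\bz \in \F$ satisfying indistinguishability and gap with respect to $\A$. The key observation is that the indistinguishability condition lets us couple the two runs of $\A$ — the run against $f_\by$ and the run against $f_\bz$ — so that they are identical except on an event of probability $n^{-\omega(1)}$. Concretely, I would argue round by round: if the queries $\Q_1, \dots, \Q_i$ at all rounds up to $i$ receive the same answers under $f_\by$ and $f_\bz$, then since $\A$ is $r$-adaptive the distribution of $\Q_{i+1}$ is the same in both runs (it depends only on past answers and $\A$'s internal randomness); combined with the indistinguishability guarantee that $f_\by$ and $f_\bz$ agree on $\Q_{i+1}$ with probability $1 - n^{-\omega(1)}$, a union bound over the $r+1 = \poly(n)$ rounds shows the entire transcripts — including the returned point $\bx^r$ — coincide with probability $1 - n^{-\omega(1)}$.

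Next I would combine this coupling with the gap condition. On the event that the two runs agree (probability $1 - n^{-\omega(1)}$) \emph{and} that $\A$ succeeds against $f_\by$ (probability $> p$), we have a single point $\bx^r \in \mathcal{W}$ with $f_\by(\bx^r) - \min_{\bx} f_\by(\bx) \le \alpha$. But the gap condition says $\max(f_\by(\bx^r), f_\bz(\bx^r)) - \min_{\bx} f_\by(\bx) > \alpha$ for every $\bx^r \in \mathcal{W}$, and since $\min_{\bx} f_\by(\bx) = \min_{\bx} f_\bz(\bx)$, this forces $f_\bz(\bx^r) - \min_{\bx} f_\bz(\bx) > \alpha$ — i.e. $\A$ \emph{fails} against $f_\bz$ on exactly this point. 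So the probability that $\A$ succeeds against $f_\bz$ is at most the probability that the runs disagree, namely $n^{-\omega(1)}$. This contradicts the assumption that $\A$ succeeds against $f_\bz$ with probability $> p = \omega(1/n)$, since $n^{-\omega(1)} = o(1/n) < \omega(1/n)$ for large $n$.

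The main obstacle I expect is making the round-by-round coupling argument fully rigorous: one has to be careful that "adaptive to the answers at rounds $j < i$" really does pin down the conditional distribution of round-$i$ queries given the shared transcript, and that the indistinguishability statement is quantified in a way compatible with this conditioning (the statement conditions on $\Q_i$ being generated against $f_\by$, so one needs the prefix-agreement to transfer the claim to the $f_\bz$-run as well). A clean way to handle this is to fix the internal randomness of $\A$, reducing to deterministic algorithms, run the coupling for each fixed random seed, and then average; the probabilities $1 - n^{-\omega(1)}$ and $> p$ are then taken over this shared seed. The rest — the union bound over $\poly(n)$ rounds and the final arithmetic comparing $n^{-\omega(1)}$ with $\omega(1/n)$ — is routine.
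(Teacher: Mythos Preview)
Your coupling argument is essentially the paper's approach (the paper phrases it as randomizing the oracle uniformly over $\{f_\by, f_\bz\}$ rather than an explicit coupling, but the content is the same), and your round-by-round induction with a fixed random seed is a cleaner way to make the indistinguishability step rigorous than what the paper writes.

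However, the final probabilistic step contains a genuine error. You correctly deduce that on the event $E$ (the two coupled runs produce the same output $\bx^r$) intersected with $S_\by$ (the event that $\A$ is $\alpha$-good for $f_\by$), the gap condition forces $\A$ to fail for $f_\bz$. In symbols, $E \cap S_\by \subseteq \neg S_\bz$. But from this you conclude $\Pr[S_\bz] \le \Pr[\neg E] = n^{-\omega(1)}$, which does not follow: on $E \cap \neg S_\by$ the output may well be $\alpha$-good for $f_\bz$ (the gap condition only says \emph{at most one} of $f_\by, f_\bz$ is satisfied at any point, not that neither is). The valid consequence is $S_\bz \subseteq \neg E \cup \neg S_\by$, hence
\[
\Pr[S_\by] + \Pr[S_\bz] \;\le\; 1 + \Pr[\neg E] \;=\; 1 + n^{-\omega(1)},
\]
so at least one of $f_\by, f_\bz$ has success probability at most $\tfrac{1}{2} + n^{-\omega(1)}$, i.e.\ failure probability at least $\tfrac{1}{2} - n^{-\omega(1)} = \omega(1/n)$. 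That is exactly the conclusion the theorem needs (and is what the paper's ``pick $f_\by$ or $f_\bz$ with probability $1/2$'' argument is implicitly establishing). Your stronger claim $\Pr[S_\bz] \le n^{-\omega(1)}$ would only go through if you additionally assumed $\Pr[S_\by] \ge 1 - n^{-\omega(1)}$, which is not part of the contrapositive hypothesis when $p$ is merely $\omega(1/n)$.
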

\begin{proof}
Consider an algorithm $\A$ for $\F$. Let $f_\by, f_\bz \in \F$ be the functions satisfying the indistinguishability and $\alpha$-gap conditions.

 Pick the function oracle to be either $f_\by$ or $f_\bz$ with probability $1/2$ each. By the indistinguishability property, with probability $1 - n^{-\omega(1)}$ over the queries $\A$, the answers of the oracle to all queries by $\A$ are independent of whether the oracle is for $f_\by$ or $f_\bz$.  Thus, the decisions by the algorithm are independent of the randomization over $f_\by$ and $f_\bz$. By the gap condition,  with probability $1 - n^{-\omega(1)}$ over the algorithm, we conclude that  the algorithm returns a (possibly randomized) $\bx'$ such that either $\E_{\bx'}[f_\by(\bx')] - \min_{\bx \in \mathcal{W}} f_\by(\bx) > \alpha$ or $\E_{\bx'}[f_\bz(\bx')] - \min_{\bx \in \mathcal{W}} f_\bz(\bx) > \alpha$.
\end{proof}

% !TeX root = main.tex

\section{The Obfuscating Oracle}
\label{sec:obfuscating}

In this section, we construct two functions which satisfy the indistinguishability and gap conditions for Theorem~\ref{thm:meta}. This construction relies on a tool  called an obfuscating oracle. The definition and construction of an obfuscating oracle for $\F_r$ is the main conceptual part of the analysis. Recall that to obtain the two desired conditions, we need to show that for any $r$-adaptive algorithm $\A$, there exist two functions $f_\by, f_\bz \in \F_r$ that have equal value over all queries by $\A$ but do not have a common minimizer.

 We start with a high level overview of the structure of this pair of functions $f_{\by}, f_{\bz} \in \F_r$. Recall that a function in $\F_r$ is defined by a binary vector $\by \in \{-1,1\}^n$ broken into $r+1$ vectors $\by_1, \ldots, \by_{r+1}$. For our construction of $f_{\by}$ and $f_{\bz}$, $\by_i = \bz_i$ for $i \leq r$  but $\by_{r+1} \neq \bz_{r+1}$. The identical first $r$ blocks imply the indistinguishability condition and the different last block  implies the gap condition. More precisely, we wish to pick $\by_1, \ldots, \by_{r}, \by_{r+1}$ such that for all queries $\bx$ at round $i$, $|\bx^{\intercal}_j \by_j| < \epsilon$ for all $j \geq i$. Note that by the definition of $f$, this implies that $f_\by(\bx) = f_\bz(\bx) = \gamma \cdot \max_{\ell \in [i]} \left(\bx^{\intercal}_\ell \by_\ell -  2 \ell  \epsilon\right)$ and thus indistinguishability. Intuitively, the consequence is that  algorithm $\A$ does not learn $\by_i$ before round $i$, and in particular does not distinguish $\by_{r+1}$ and $\bz_{r+1}$ at the end of the $r$ rounds of the algorithm.

 An important subtlety which complicates the analysis is that algorithm $\A$ can learn \emph{some} information about $\by_i$ before round $i$. This is since with query $\bx$ at round $j < i$, with the answer  $f_\by(\bx) = f_\bz(\bx) = \gamma \cdot \max_{\ell \in [j]} \left(\bx^{\intercal}_\ell \by_\ell -  2 \ell  \epsilon\right)$ of the oracle, $\A$ learns that $\bx^{\intercal}_i \by_i -  2 i  \epsilon
 <  \max_{\ell \in [j]} \left(\bx^{\intercal}_\ell \by_\ell -  2 \ell  \epsilon\right)$. Thus, we cannot argue that $\A$ does not learn any information about $\by_i$ and  that queries at round $j < i$ are completely independent of $\by_i$. The remaining of this section is devoted to finding $\by_1, \ldots, \by_{r+1}$ such that $f_{\by}$ and $f_{\bz}$, where $\bz_i = \by_i$ for $i \leq r$ and $\bz_{r+1} = - \by_{r+1}$, satisfy the indistinguishability and gap conditions. The main difficulty is that we wish to pick $\by_1, \ldots, \by_{r+1}$ depending on the queries of the algorithm $\A$, but since $\A$ is adaptive and it can learn partial information about $\by_1, \ldots,  \by_{r+1}$, the queries of the algorithm are dependent on $\by_1, \ldots, \by_{r+1}$, which creates a cycle of dependence. Some subtle dependencies between $\by_1, \ldots, \by_{r+1}$ and $\A$ are needed.

\subsection{The definition of an obfuscating oracle}

Instead of a function oracle $f$ which is defined before the algorithm $\A$ starts querying $f$, an obfuscating oracle is an oracle which is adaptively defined as it interacts with the queries of an algorithm. In particular, the answers of an obfuscating oracle might be dependent on the  previous queries by $\A$ and on the round $i$ in which a query occurs, which is of course not possible for a function oracle. 

In our case, we construct an obfuscating oracle $f_\A$ which, similarly as  function oracles in $\F_r$, depends on points $\by_1, \ldots, \by_{r+1}$. The main idea is to define point $\by_i$ for obfuscating oracle $f_\A$  depending on the queries of $\A$ at rounds $j \leq i$, as illustrated in Figure~\ref{fig:resistantoracle}. Deferring the choice of $\by_i$ for $f_\A$ until round $i$ of $\A$ is the key part of the obfuscating oracle which allows us to argue about indistinguishability, and involves important subtleties which are discussed in Section~\ref{sec:resistantconstruction} where we formally construct the obfuscating oracle.  First, we formally defining obfuscating oracles. An obfuscating oracle $f_\A$ to $\A$ is an oracle that is defined by its interactions with $\A$.

\begin{figure}
\centering
\includegraphics[scale=0.6]{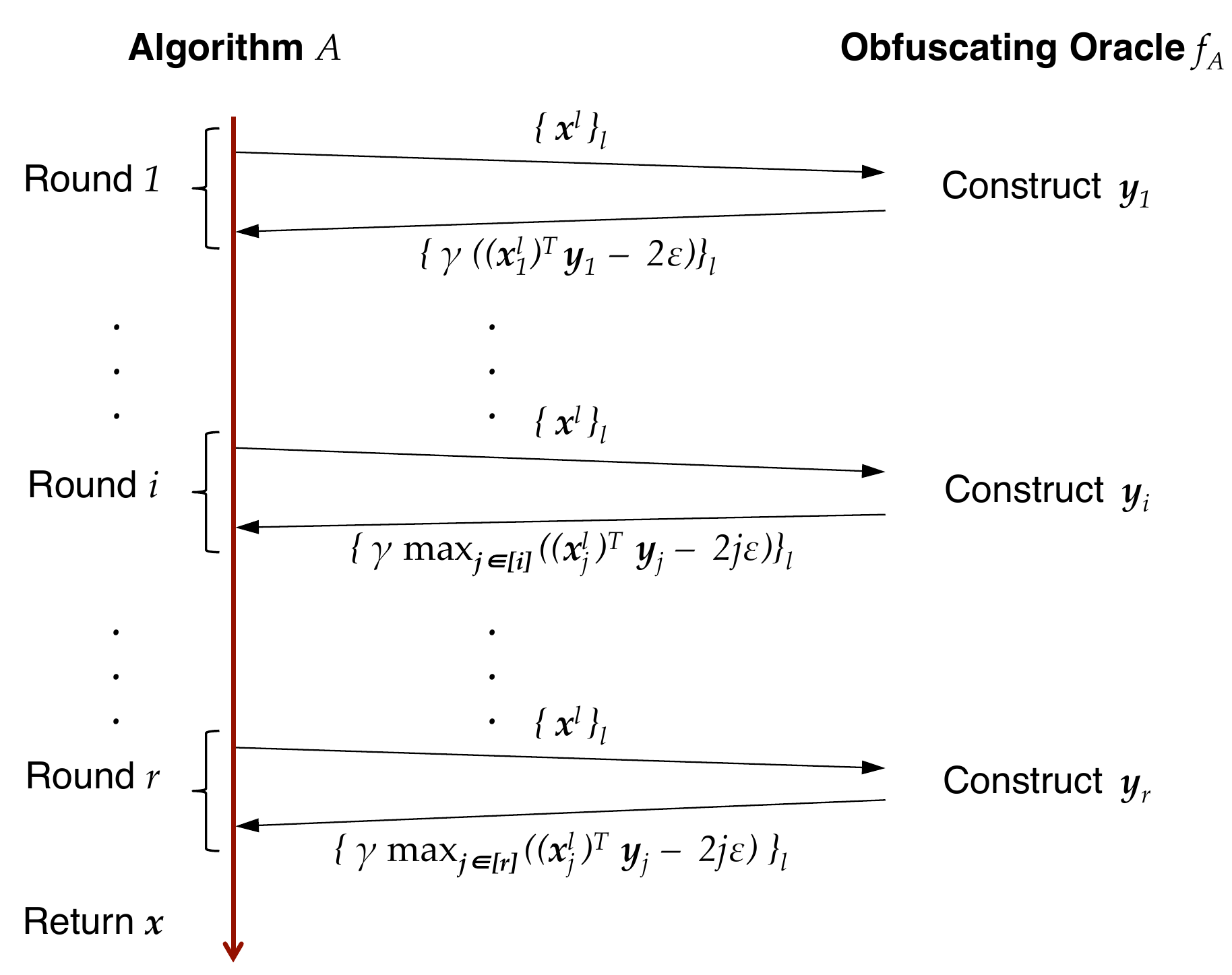}
\caption{The interactions between the obfuscating oracle $f_\A$ and the algorithm $\A$}
\label{fig:resistantoracle}
\end{figure}

\begin{definition}
\label{def:resisting}
 Let $\A$ be some algorithm. An obfuscating oracle $f_\A$ is defined inductively on the $i$th round of queries by $\A$ to $f_\A$. At round $0$, $f_\A(\bx)$ is undefined for all $\bx$. At round $i$, we assume that $f_\A(\by)$ is defined for all queries $\by$ at rounds $j < i$. We consider queries $\mathcal{Q}_i$ at round $i$ by algorithm $\A$, which has  received answers $f_\A(\by)$ for all queries $\by$ at rounds $j < i$ from the oracle. For all $\bx \in \mathcal{Q}_i$, $f_\A(\bx)$ is defined independently of all queries at rounds $j > i$.
\end{definition}

Next, we formalize an obfuscating condition  which implies the indistinguishability property. The obfuscating condition states that, with high probability, there exist two functions in $\F_r$ which have equal value with $f_\A$ over all queries by $\A$.

\begin{lemma}
\label{lem:obfuscating}
Assume that for all $r$-adaptive algorithms $\A$, there exists, with probability $1 - n^{-\omega(1)}$, an obfuscating oracle $f_\A$ such that, for some $f_\by, f_\bz$,  
\begin{itemize}
\item \textbf{Obfuscating condition:} for all queries $\bx$ by $\A$ to obfuscating oracle $f_\A$, $$f_\A(\bx) = f_\by(\bx) = f_\bz(\bx).$$
\end{itemize}
 Then $f_\by$ and $f_\bz$ satisfy the indistinguishability property.
\end{lemma}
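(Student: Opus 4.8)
The plan is to show that the obfuscating condition is essentially a reformulation of indistinguishability, so the proof amounts to carefully unwinding the two definitions and matching up the probabilistic quantifiers. Recall that indistinguishability asks: for each round $i \leq r$, with probability $1 - n^{-\omega(1)}$ over $\A$, all queries $\bx \in \Q_i$ (adaptive to the answers $f_\by(\bx)$ at earlier rounds) satisfy $f_\by(\bx) = f_\bz(\bx)$. The obfuscating condition gives, with probability $1 - n^{-\omega(1)}$, an oracle $f_\A$ and functions $f_\by, f_\bz$ such that every query $\bx$ that $\A$ makes while interacting with $f_\A$ satisfies $f_\A(\bx) = f_\by(\bx) = f_\bz(\bx)$.

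The key observation I would make explicit is a coupling between two runs of $\A$: the run against $f_\A$ and the run against $f_\by$ (equivalently $f_\bz$). I would argue inductively on the round $i$ that, on the high-probability event where the obfuscating condition holds, the queries $\A$ makes against $f_\A$ up through round $i$ are identical to the queries it would make against $f_\by$. The base case is round $1$, where $\A$ has received no answers and hence queries the same set $\Q_1$ regardless of the oracle. For the inductive step, if through round $j < i$ the queries against $f_\A$ and against $f_\by$ agree, then by the obfuscating condition the answers agree as well ($f_\A(\bx) = f_\by(\bx)$ on exactly those queries), so $\A$, being a function of the answers it has seen, issues the same queries at round $i$ in both runs; here I use that $f_\A(\bx)$ at round $i$ is defined independently of queries at later rounds (Definition~\ref{def:resisting}), so there is no circularity. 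Consequently $\Q_i$, the round-$i$ query set in the run against $f_\by$, equals the round-$i$ query set against $f_\A$, and on every such query $f_\by(\bx) = f_\A(\bx) = f_\bz(\bx)$, which is exactly what indistinguishability requires.

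Finally I would assemble the probability bound: the obfuscating condition holds with probability $1 - n^{-\omega(1)}$, and on that event the conclusion $f_\by(\bx) = f_\bz(\bx)$ holds simultaneously for all $\bx \in \Q_i$ and all $i \leq r$, which is at least as strong as the per-round statement demanded by indistinguishability; a union bound over the $r \leq n$ rounds only costs a factor of $n$ and preserves the $1 - n^{-\omega(1)}$ form. The main obstacle — and the point that deserves the most careful writing — is the coupling/induction argument: one must be precise that "the queries of $\A$ are a deterministic (or, under randomization, coupled) function of the transcript of answers received so far," and that the inductive hypothesis propagates through the adaptive definition of $f_\A$ without the later-round independence clause of Definition~\ref{def:resisting} being violated. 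Everything else is bookkeeping of quantifiers.
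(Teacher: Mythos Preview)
Your proposal is correct and follows essentially the same approach as the paper: an induction on the round $i$ showing that, on the high-probability event where the obfuscating condition holds, the queries $\A$ makes against $f_\A$ coincide with those it makes against $f_\by$, so that $f_\by(\bx)=f_\bz(\bx)$ on all of $\Q_i$. The paper's proof is terser (it does not spell out the base case or the coupling as explicitly as you do), but the argument is the same.
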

\begin{proof}
Assume that the obfuscating condition holds, which occurs with probability $1 - n^{-\omega(1)}$. We show that $f_\by$ and $f_\bz$ satisfy the indistinguishability property by induction on the round $i$. Consider round $i$ and assume that for all queries $\bx$ by $\A$ to function oracle $f_\by$ from previous rounds, $f_\A(\bx) = f_\by(\bx) = f_\bz(\bx)$. Since $f_\A$ and $f_\by$ have equal value over all previous queries, the algorithm cannot distinguish between if it is querying $f_\A$ or $f_\by$. Thus, the queries $\Q_i$ by $\A$ at round $i$ to function oracle $f_\by$ are identical to the queries by $\A$ at round $i$ to obfuscating oracle $f_\A$. By the obfuscating condition, for all $\bx \in \Q_i$, we have that   $f_\A(\bx) = f_\by(\bx) = f_\bz(\bx)$. Since this holds with probability $1 - n^{-\omega(1)}$ for all rounds $i \leq r$, we get that $f_\by$ and $f_\bz$ satisfy the indistinguishability property.
\end{proof}

\subsection{The construction of the obfuscating oracle}
\label{sec:resistantconstruction}

We construct an obfuscating oracle $f_\A$ for $\F_r$.  Let $\A$ be any $r$-adaptive algorithm. We construct $f_\A$ inductively on the round  $i$ of queries. At round $i$, let $\Q_i$ be  the (possibly randomized) collection of queries by $\A$, after having received answers $f_\A(\bx)$ from $f_\A$ for queries $\bx$ at rounds $j < i$.  Let $\by_1, \ldots, \by_{i-1}$ be the points previously constructed by $f_\A$. Then,
\begin{itemize}
\item Let  $\by_i \in \{-1,1\}^{n/(r+1)}$ be a binary vector such that
$$\Pr_{\bx \in \cup_{j=1}^i \Q_j}\left[|\bx^\intercal_i \by_i| < \epsilon, \text{ for all } \bx \in \cup_{j=1}^i \Q_j\right] = 1 - n^{-\omega(1)},$$
i.e.,  for all queries $\bx \in \cup_{j=1}^i \Q_j$, $\bx^\intercal_i \by_i$ concentrates with high probability. 

\item The obfuscating oracle $f_\A$ at round $i$  answers, for all $\bx \in \Q_i$,
$$f_\A(\bx) = \gamma \cdot \max_{j \in [i]} \left(\bx^{\intercal}_j \by_j -  2 j  \epsilon \right).$$
\end{itemize}
 Finally, let $\by_{r+1} \in \{-1,1\}^{n/(r+1)}$ be a binary vector such that $$\Pr_{\bx \in \cup_{j=1}^r \Q_r}\left[|\bx^\intercal_{r+1} \by_{r+1}| < \epsilon, \text{ for all } \bx \in \cup_{j=1}^r \Q_r\right] = 1 - n^{-\omega(1)}.$$
 
 Thanks to the obfuscating oracle, we are now ready to define $\by$ and $\bz$ for the two functions $f_\by$ and $f_{\bz}$ for which we show the indistinguishability and gap conditions. The function $f_\by$ is defined by the $r+1$ blocks $\by_i$ constructed by the obfuscating oracle $f_\A$ and $f_{\bz}$ is defined by the $r+1$ blocks $\bz_i$ such that $\bz_i := \by_i$ for $i \leq r$ and $\bz_{r+1} = - \by_{r+1}$.

The crucial part of $f_\A$ is that the maximum is over $j \in [i]$ instead of $j \in [r+1]$. There are multiple subtleties and difficulties with the above construction of the obfuscating oracle, which we discuss next. 
\begin{itemize}
\item The definition of  $\by_i$ at round $i$ must be carefully constructed to not contradict an answer of $f_\A$ to a query $\bx$ from a round $j < i$. In other words, since $f_\A$ answered  $f_\A(\bx) = \gamma \cdot \max_{\ell \in [j]} \left(\bx^{\intercal}_\ell \by_\ell -  2 \ell  \epsilon \right)$ to a query $\bx$ at round $j < i$, we wish to have $\by_i$ such that $$\gamma \cdot \max_{\ell \in [i]} \left(\bx^{\intercal}_\ell \by_\ell -  2 \ell  \epsilon \right) = \gamma \cdot \max_{\ell \in [j]} \left(\bx^{\intercal}_\ell \by_\ell -  2 \ell  \epsilon \right)$$ for the obfuscating condition. It is for this reason that $\by_i$ is defined so that the concentration of $\bx^\intercal_i \by_i$ not only holds for queries $\Q_i$ at round $i$, but for all queries $\cup_{j=1}^i \Q_j$ at rounds $j \leq i$.
\item The obfuscating oracle $f_\A$ does not always construct $\by_1, \ldots, \by_{r+1}$ such that $f_\A(\bx) = f_{\by}(\bx)$ for  $f_{\by} \in \F_r$ for all queries $\bx$. This is because the concentration property of $\by_i$ only holds with high probability over the queries of the algorithm. Thus,  for a query $\bx$ at round $i$ answered with $\gamma \cdot \max_{\ell \in [o]} \left(\bx^{\intercal}_\ell \by_\ell -  2 \ell  \epsilon \right)$ by $f_\A$, $f_{\by}(\bx) = \gamma \cdot \max_{\ell \in [i]} \left(\bx^{\intercal}_\ell \by_\ell -  2 \ell  \epsilon \right)$ only holds with high probability and the answers of $f_\A$ might not correspond to a function in $\F_r$. This even implies that for a same queries $\bx$ at two different rounds, $f_\A$ might answer differently.
\item Note that for all $i$, the queries $\Q_i$ at round $i$ are not independent of $\by_{i-1}$ and $\by_i$ is not independent of $\Q_i$. Thus, there are multiple layers of dependencies between $\A$ and $f_\A$.
\item Finally, and most importantly, it is not trivial that there exists $\by_i$ satisfying the concentration condition for the definition of $f_\A$ at round $i$. Showing that for any randomized $\poly(n)$ queries at rounds $j \leq i$ by $\A$, there exists $\by_i$ such that, with high probability, for all these queries $\bx$, $\bx^\intercal_i \by_i$ concentrates is an important part of the analysis which is shown in Lemma~\ref{lem:existence}.
\end{itemize}

We use  the term $1 - n^{-\omega(1)}$, and more precisely $1 - 1/n^{\log n}$ for our purposes,  to apply union bounds over at most $\poly(n)$ events each happening with probability $1 - n^{-\omega(1)}$. This is useful since the number of queries is at most $\poly(n)$.

%Finally, we define $\Q_\leq i$ to be $\Q_{\leq i-1} \cup \Q_i$ conditioned on   the following: for all $\bx \in \Q_{\leq i-1} \cup \Q_i$, 
%$|\bx^\intercal \by_i| < \epsilon$. 

%The oracle $f$ then answers to the algorithm
%$$f(\bx) = \gamma \cdot \max_{j \in [i]} \left((\bx^j)^{\intercal} \by_j + \ell  D   r^{-1/2}  \log n\right)$$ to all queries $\bx \in Q_i$.

%\paragraph{The analysis of the resisting oracle.} We show that the resisting oracle $f$ satisfies both the resistance condition (Section~\ref{sec:resistantC}) and the $\alpha$ gap condition (Section~\ref{sec:gapC}) from Definition~\ref{def:meta} with $\alpha = G D\left( \frac{1}{2\sqrt{r+1}} -  \frac{(r+1/2) \log n}{\sqrt{n}} \right)$  and $\gamma = \sqrt{\frac{r+1}{n}} \cdot G$. We then show the existence of the desired $\by_i$ for all $i$ for the resisting oracle in Section~\ref{sec:existence}. Finally, by using Lemma~\ref{lem:meta}, we conclude by showing our main result in Section~\ref{sec:mainresult}.

% !TeX root = main.tex

\section{Proof of the Main Theorem}
\label{sec:analysis}

In this section, we show that the functions $f_\by$ and $f_\bz$ constructed in the previous section satisfy the indistinguishability and gap conditions. The indistinguishability condition is satisfied by showing that the obfuscating oracle $f_\A$ with $f_\by$ and $f_\bz$ from the previous section satisfies the obfuscating condition. The main hardness result then immediately follows by Theorem~\ref{thm:meta}. In Section~\ref{sec:existence}, we show the existence of the  $\by_i$ blocks needed for the obfuscating oracle, which is the main technical part of this section. Then, we show the obfuscating condition in Section~\ref{sec:resistantC} and the gap condition in Section~\ref{sec:gapC}. We bound the subgradients of any function $f \in \F_r$ in Section~\ref{sec:subgradients}. Finally, we conclude with the main result in Section~\ref{sec:mainresult}.

\subsection{The existence of $\by_i$ for the obfuscating oracle}
\label{sec:existence}

We show the existence of the blocks $\by_i$ needed for the obfuscating oracle $f_\A$ defined in Section~\ref{sec:obfuscating}.

\begin{lemma}
\label{lem:existence} For any $i \in [r+1]$ and randomized collection of queries $\cup_{j=1}^i \Q_j$,  there exists $\by_i$ such that 
$$\Pr_{\bx \in \cup_{j=1}^i \Q_j}\left[|\bx^\intercal \by_i| < \epsilon, \text{ for all } \bx \in \cup_{j=1}^i \Q_j\right] = 1 - n^{-\omega(1)}.$$
\end{lemma}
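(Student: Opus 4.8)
The plan is to select $\by_i$ by the probabilistic method, performing exactly the ``derandomization'' flagged in the technical overview: rather than reasoning about random queries against a fixed $\by_i$, I will reason about a fixed realized collection of queries against a \emph{uniformly random} $\by_i \in \{-1,1\}^{n/(r+1)}$. What makes this sound despite the cycle of dependence is that the distribution of the query collection $\cup_{j=1}^{i}\Q_j$ does not depend on $\by_i$ at all: by the construction of the obfuscating oracle, $\by_i$ is committed to only \emph{after} $\Q_i$ has been issued, so the queries are generated by $\A$ from its internal randomness and the earlier blocks $\by_1,\dots,\by_{i-1}$ only. (Here and below I write $\bx_i$ for the $i$-th block of a query $\bx$, which is what the statement intends, since $\by_i$ has dimension $n/(r+1)$.)

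First I would record a deterministic bound. Every query $\bx$ lies in the domain $\mathcal{W}=[-D/(2\sqrt n),D/(2\sqrt n)]^{n}$, so $\bx_i\in\R^{n/(r+1)}$ has $n/(r+1)$ coordinates each of magnitude at most $D/(2\sqrt n)$, whence $\|\bx_i\|^2 \le \frac{n}{r+1}\cdot\frac{D^2}{4n} = \frac{D^2}{4(r+1)}$. Since $\epsilon = \frac{D\log n}{2\sqrt{r+1}}$, this yields $\epsilon^2/(2\|\bx_i\|^2) \ge (\log n)^2/2$ for every query. Next, fix any realization of the query-generating randomness; this fixes a set of at most $\poly(n)$ query points. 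For a fixed such $\bx$ and $\by_i$ uniform on $\{-1,1\}^{n/(r+1)}$, the quantity $\bx_i^{\intercal}\by_i = \sum_k (\bx_i)_k(\by_i)_k$ is a sum of independent mean-zero terms, the $k$-th bounded in magnitude by $|(\bx_i)_k|$; Hoeffding's inequality then gives $\Pr_{\by_i}[\,|\bx_i^{\intercal}\by_i|\ge\epsilon\,] \le 2\exp(-\epsilon^2/(2\|\bx_i\|^2)) \le 2\exp(-(\log n)^2/2) = n^{-\omega(1)}$. A union bound over the $\poly(n)$ queries of the realization --- harmless since the per-query failure probability $n^{-\Theta(\log n)}$ absorbs any $\poly(n)$ factor --- gives $\Pr_{\by_i}[\,\exists\,\bx\in\cup_{j=1}^{i}\Q_j:\ |\bx_i^{\intercal}\by_i|\ge\epsilon\,] = n^{-\omega(1)}$.

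Finally I would exchange the two independent sources of randomness. Since the bound above holds for \emph{every} realization of the query-generating randomness, Fubini's theorem gives $\E_{\by_i}\big[\Pr_{\cup_j\Q_j}[\exists\,\bx:\,|\bx_i^{\intercal}\by_i|\ge\epsilon]\big] = \Pr_{\by_i,\,\cup_j\Q_j}[\exists\,\bx:\,|\bx_i^{\intercal}\by_i|\ge\epsilon] = n^{-\omega(1)}$, so there is a \emph{fixed} $\by_i\in\{-1,1\}^{n/(r+1)}$ with $\Pr_{\cup_j\Q_j}[\exists\,\bx:\,|\bx_i^{\intercal}\by_i|\ge\epsilon] = n^{-\omega(1)}$ --- equivalently $\Pr_{\bx\in\cup_{j=1}^{i}\Q_j}[\,|\bx_i^{\intercal}\by_i|<\epsilon\text{ for all }\bx\in\cup_{j=1}^{i}\Q_j\,] = 1-n^{-\omega(1)}$, which is the claim. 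I expect the concentration estimate to be routine; the delicate point is the quantifier/independence bookkeeping --- ensuring every query really lies in $\mathcal{W}$ so the norm bound is uniform over queries, and ensuring the distribution of $\cup_{j=1}^{i}\Q_j$ is genuinely independent of $\by_i$ so that the two expectations may legitimately be swapped, which is precisely what the deferred round-$i$ definition of $\by_i$ in the obfuscating oracle secures.
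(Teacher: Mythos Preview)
Your proposal is correct and follows essentially the same route as the paper: the paper splits the argument into a concentration lemma (Hoeffding applied to a fixed $\bx$ against uniform $\by_i$, yielding failure probability $2n^{-(\log n)/2}$) and a derandomization lemma (union bound over the $\poly(n)$ queries in a realization, then average over realizations and pick the best $\by_i$), and you have carried out exactly these two steps in the same order with the same estimates. Your added remarks on why the distribution of $\cup_{j\le i}\Q_j$ is independent of $\by_i$ make explicit what the paper leaves implicit, but the underlying argument is the same.
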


The remainder of Section~\ref{sec:existence} is devoted to proving Lemma~\ref{lem:existence}. The main idea of the proof is that instead of considering all possible randomized collection of queries $\cup_{j=1}^i \Q_j$, we consider a random $ \by_i \sim \U$ where $\U$ is the uniform distribution over all binary vectors $\{-1,1\}^{n/(r+1)}$. The next lemma switches the randomization from the queries to the randomization of $\by_i \sim \U$. This is done by reducing the problem of showing the claim for any randomized collection of queries $\cup_{j=1}^i \Q_j$ to showing the claim for any $\bx$ and random $ \by_i \sim \U$. This is useful since standard concentrations bounds  apply more easily to $ \by_i \sim \U$ than to $\cup_{j=1}^i \Q_j$.

\begin{restatable}{rLem}{lemRandomization}
\label{lem:randomization}
 Assume that for all $\bx \in \mathcal{W}$, w.p. $1 - n^{-\omega(1)}$ over  $ \by_i \sim \U$, we have that
$|\bx^\intercal \by_i| < \epsilon$.  Then, for any (possibly randomized) collection of $\poly(n)$  queries $ \cup_{j=1}^i \Q_j$, there exists a deterministic  $\by_i$ such that with probability  $1 - n^{-\omega(1)}$ over the randomization of $\cup_{j=1}^i \Q_j$,
for all queries $\bx \in \cup_{j=1}^i \Q_j$, $|\bx^\intercal \by_i| < \epsilon$.
\end{restatable}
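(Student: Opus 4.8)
The plan is to prove Lemma~\ref{lem:randomization} by an averaging (probabilistic method) argument that swaps the roles of the two sources of randomness: the hypothesis is a statement that holds \emph{pointwise} for every $\bx \in \mathcal{W}$ against a random $\by_i \sim \U$, and I want to turn it into a statement about a single deterministic $\by_i$ that works against the random queries. First, fix an arbitrary realization $Q = \{\bx^{(1)}, \dots, \bx^{(m)}\} \subseteq \mathcal{W}$ of the query collection $\cup_{j=1}^i \Q_j$, where $m \le \poly(n)$. Reading, as per the paper's convention, ``$1 - n^{-\omega(1)}$'' in the hypothesis as ``$\ge 1 - 1/n^{\log n}$'', we have $\Pr_{\by_i \sim \U}[|{\bx^{(k)}}^\intercal \by_i| \ge \epsilon] \le 1/n^{\log n}$ for each $k$, so a union bound over the at most $\poly(n)$ queries gives
$$\Pr_{\by_i \sim \U}\!\left[\exists k \in [m] : |{\bx^{(k)}}^\intercal \by_i| \ge \epsilon\right] \;\le\; \frac{m}{n^{\log n}} \;=\; n^{-\omega(1)}.$$

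Next I would average out the randomness of the queries. The collection $\cup_{j=1}^i \Q_j$ is a fixed distribution over sets of at most $\poly(n)$ points in $\mathcal{W}$ that does not involve $\by_i$ (indeed, in the construction of $f_\A$ the vector $\by_i$ is committed to only at round $i$, whereas the answers to queries at rounds $j<i$ depend on $\by_1,\dots,\by_{j}$ only), so we may draw $\by_i \sim \U$ independently of $\cup_{j=1}^i \Q_j$. Then, by the law of total probability and the previous display applied to each realization $Q$,
$$\Pr_{\cup_{j=1}^i \Q_j,\ \by_i \sim \U}\!\left[\exists \bx \in \cup_{j=1}^i \Q_j : |\bx^\intercal \by_i| \ge \epsilon\right] \;=\; \E_{\cup_{j=1}^i \Q_j}\!\left[\Pr_{\by_i \sim \U}\!\left[\exists \bx : |\bx^\intercal \by_i| \ge \epsilon \;\middle|\; \cup_{j=1}^i \Q_j\right]\right] \;\le\; n^{-\omega(1)}.$$
Finally, writing $g(\by_i) := \Pr_{\cup_{j=1}^i \Q_j}[\exists \bx \in \cup_{j=1}^i \Q_j : |\bx^\intercal \by_i| \ge \epsilon]$, the last display says $\E_{\by_i \sim \U}[g(\by_i)] \le n^{-\omega(1)}$, so there exists a deterministic $\by_i^\star \in \{-1,1\}^{n/(r+1)}$ with $g(\by_i^\star) \le n^{-\omega(1)}$; equivalently, with probability $1 - n^{-\omega(1)}$ over the randomization of $\cup_{j=1}^i \Q_j$, every query $\bx \in \cup_{j=1}^i \Q_j$ satisfies $|\bx^\intercal \by_i^\star| < \epsilon$. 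This is the claim.

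The only delicate point inside this proof is the order of the three operations: the union bound must come first, for a \emph{fixed} query set against random $\by_i$ (a union bound directly over all $\bx \in \mathcal{W}$ is impossible); then the query randomness is averaged out; and only at the very end is a single good $\by_i^\star$ extracted, since averaging over $\by_i$ before averaging over the queries would leave $\by_i$ random. The genuinely probabilistic content — the hypothesis that for each fixed $\bx \in \mathcal{W}$ a uniform sign vector makes $|\bx^\intercal \by_i| < \epsilon$ except with probability at most $1/n^{\log n}$ — is not part of this lemma; I would discharge it separately by a Hoeffding bound on the sum of the $n/(r+1)$ independent bounded terms $x_\ell (\by_i)_\ell$, using $\|\bx\|_\infty \le D/(2\sqrt n)$ and the choice $\epsilon = D\log n/(2\sqrt{r+1})$.
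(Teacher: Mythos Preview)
Your argument is correct and is essentially the same as the paper's: fix a realization of the query set, union-bound over its $\poly(n)$ points against a random $\by_i \sim \U$, then use averaging ($\max_{\by_i} \Pr_{\Q}[\cdot] \ge \E_{\by_i \sim \U}\Pr_{\Q}[\cdot]$) to extract a deterministic $\by_i^\star$. Your write-up is more explicit about the independence of $\cup_{j=1}^i \Q_j$ from $\by_i$ and about the order in which the union bound, expectation over queries, and extraction of $\by_i^\star$ must be applied, but the content is identical.
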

\begin{proof}
We denote by  $I(\by, \Q)$  the event that $|\bx^\intercal \by| < \epsilon$ for all $\bx \in \Q$. Let $\cup_{j=1}^i \Q_j$ be a  randomized collection of $\poly(n)$  queries and let $\by_i \sim \U$ be such that  for all $\bx$, w.p. $1 - n^{-\omega(1)}$ over the randomization of $\by_i \sim \U$,
 we have that
$|\bx^\intercal \by_i| < \epsilon$.

 Let $\Q$ be any realization of the randomized collection of  queries $\cup_{j=1}^i \Q_j$. By a union bound over the $\poly(n)$ queries $\bx \in \Q$, $\Pr_{\by_i \sim \U}\left[I(\by_i, \Q))\right] 
\geq 1 - n^{-\omega(1)}.$ We obtain
\begin{align*}
\max_{\by_i \in  \{-1,1\}^{n/(r+1)}}\Pr_{\Q}\left[I(\by_i,\Q)\right] \geq  \Pr_{\by_i \sim \U} \Pr_{\Q}\left[I( \by_i,\Q)\right]  
\geq 1 - n^{-\omega(1)}
\end{align*}
and thus  there exists some $\by_i \in  \{-1,1\}^{n/(r+1)}$ such that w.p. $1 - n^{-\omega(1)}$ over the randomization of $\cup_{j=1}^i \Q_j$,
for all queries $\bx \in \cup_{j=1}^i \Q_j$, $|\bx^\intercal \by| < \epsilon$.
\end{proof}

 Before showing the condition needed for Lemma~\ref{lem:randomization}, we state the following version of Hoeffding's inequality.

\begin{lemma}[Hoeffding's inequality]
\label{lem:hoeffding}
Let $X_1, \dots, X_n$ be independent random variables with values in $[a,b]$. Let $X = \sum_{i=1}^n X_i$ and $\mu = \E[X]$. Then for every $t > 0$,
$$\Pr\left[|S - \mu| \geq t\right] \leq 2 e^{- \frac{2t^2}{n (b-a)^2}}.$$
\end{lemma}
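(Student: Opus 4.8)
The plan is to prove the bound by the standard Chernoff-type argument based on exponential moments, which reduces the whole statement to a one-dimensional moment-generating-function estimate (Hoeffding's lemma). Throughout I write $S = X = \sum_{i=1}^n X_i$ and $\mu = \E[X]$ (the ``$S$'' and ``$X$'' in the statement denote the same quantity), and I set $Y_i := X_i - \E[X_i]$, so that each $Y_i$ is a zero-mean random variable supported on an interval of length $b-a$, the $Y_i$ are independent, and $\sum_i Y_i = X - \mu$.

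First I would handle the upper tail $\Pr[X-\mu \geq t]$. For any $s > 0$, Markov's inequality applied to the nonnegative random variable $e^{s(X-\mu)}$ gives
\[
\Pr[X-\mu \geq t] \;=\; \Pr\!\left[e^{s(X-\mu)} \geq e^{st}\right] \;\leq\; e^{-st}\,\E\!\left[e^{s(X-\mu)}\right] \;=\; e^{-st}\prod_{i=1}^n \E\!\left[e^{sY_i}\right],
\]
where the last equality uses independence of the $Y_i$.

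The key step — and the part I expect to be the main obstacle — is the per-coordinate estimate (Hoeffding's lemma): for a zero-mean random variable $Y$ taking values in an interval of length $\ell$, $\E[e^{sY}] \leq e^{s^2\ell^2/8}$ for all $s \in \R$. I would prove this via convexity: writing $Y \in [c,d]$ with $d-c = \ell$, convexity of $y \mapsto e^{sy}$ yields $e^{sY} \leq \frac{d-Y}{d-c}e^{sc} + \frac{Y-c}{d-c}e^{sd}$; taking expectations and using $\E[Y]=0$ reduces the claim to showing $\psi(u) \leq u^2/8$ for $u = s\ell$, where $\psi(u) = \log\!\big((1-p) + p\,e^{u}\big) - pu$ with $p = -c/(d-c) \in [0,1]$. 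A short calculus computation gives $\psi(0) = \psi'(0) = 0$ and $\psi''(u) = q(1-q)$ with $q = \frac{p\,e^u}{(1-p)+p\,e^u} \in [0,1]$, so $\psi''(u) \leq \tfrac14$; Taylor's theorem with Lagrange remainder then gives $\psi(u) \leq u^2/8$.

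Plugging the lemma into the product bound gives $\Pr[X-\mu \geq t] \leq e^{-st}\,e^{n s^2 (b-a)^2/8}$ for every $s > 0$; minimizing the exponent by choosing $s = \frac{4t}{n(b-a)^2}$ yields $\Pr[X-\mu \geq t] \leq e^{-2t^2/(n(b-a)^2)}$. Applying the identical argument to $-Y_i$ (equivalently, replacing $X$ by $-X$) bounds the lower tail $\Pr[X-\mu \leq -t]$ by the same quantity, and a union bound over the two one-sided events gives $\Pr[|X-\mu| \geq t] \leq 2\,e^{-2t^2/(n(b-a)^2)}$, as claimed. I expect the only genuinely delicate point to be the uniform bound $\psi'' \leq 1/4$ in Hoeffding's lemma; everything else is routine.
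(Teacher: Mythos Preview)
Your proof is correct and is exactly the standard Chernoff--Hoeffding argument. The paper itself does not prove this lemma; it simply states Hoeffding's inequality as a known result and then applies it in Lemma~\ref{lem:concentration}, so there is no ``paper's own proof'' to compare against. Your write-up would serve perfectly well as a self-contained justification.
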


Next, we show the condition needed for Lemma~\ref{lem:randomization}, namely that for all $\bx \in \mathcal{W}$, w.p. $1 - n^{-\omega(1)}$ over  $ \by_i \sim \U$, 
$|\bx^\intercal \by_i|$ concentrates. This follows from a straightforward application of Hoeffding's inequality.

\begin{lemma}
\label{lem:concentration}
For all $\bx \in \mathcal{W}$, with probability $1 - n^{-\omega(1)}$ over  $ \by_i \sim \U$, we have that
$|\bx^\intercal \by_i| < \epsilon$
\end{lemma}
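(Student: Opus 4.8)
The plan is to prove this by a direct application of Hoeffding's inequality (Lemma~\ref{lem:hoeffding}) to the random variable $\bx^\intercal\by_i = \sum_{k=1}^{m} x_k (\by_i)_k$, where $m := n/(r+1)$ is the block length and $x_k$ denotes the $k$-th coordinate of the relevant block of $\bx \in \mathcal{W}$. First I would note that since $\by_i \sim \U$ is uniform over $\{-1,1\}^{m}$, the coordinates $(\by_i)_k$ are independent Rademacher signs; hence the terms $x_k(\by_i)_k$ are independent, each takes values in $[-|x_k|, |x_k|]$, and $\E[\bx^\intercal\by_i] = 0$ by symmetry. Since $\bx\in\mathcal{W}$, every coordinate satisfies $|x_k|\le D/(2\sqrt n)$, so each term ranges over an interval of length at most $b-a = D/\sqrt n$.

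Next I would apply Lemma~\ref{lem:hoeffding} with $t=\epsilon=\frac{D\log n}{2\sqrt{r+1}}$, using that the sum has $m=n/(r+1)$ terms, so that $m(b-a)^2 \le \frac{n}{r+1}\cdot\frac{D^2}{n} = \frac{D^2}{r+1}$. Then the exponent satisfies
$$\frac{2t^2}{m(b-a)^2} \ \ge\ \frac{2}{D^2/(r+1)}\cdot\frac{D^2\log^2 n}{4(r+1)} \ =\ \frac{\log^2 n}{2},$$
so $\Pr\left[|\bx^\intercal\by_i|\ge\epsilon\right] \le 2e^{-(\log^2 n)/2}$, which is super-polynomially small, i.e.\ $n^{-\omega(1)}$, as claimed.

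There is no real obstacle here --- the argument is a one-line Hoeffding/Chernoff estimate --- and the only point requiring care is the bookkeeping: the block has $n/(r+1)$ coordinates rather than $n$, and it is precisely the interplay between this count and the per-coordinate range $D/(2\sqrt n)$ that cancels the $\sqrt{r+1}$ hidden in $\epsilon$ and leaves the clean $\Theta(\log^2 n)$ in the exponent. The conceptually subtle parts of the construction --- derandomizing by swapping the randomness of the adaptive queries for the randomness of $\by_i$, and union-bounding over the $\poly(n)$ queries --- do not enter this lemma; they are handled in Lemma~\ref{lem:randomization} and Lemma~\ref{lem:existence}, for which this concentration estimate is exactly the required hypothesis.
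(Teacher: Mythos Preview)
Your proof is correct and follows essentially the same approach as the paper: a direct application of Hoeffding's inequality to the sum of $n/(r+1)$ independent mean-zero terms, each bounded in an interval of length $D/\sqrt{n}$, yielding the tail bound $2e^{-(\log^2 n)/2} = n^{-\omega(1)}$. Your bookkeeping of the block length and per-coordinate range matches the paper's computation exactly.
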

\begin{proof} Consider $\bx^{\intercal} \by_i$ with $ \by_i \sim \U$. By ignoring indices $j$ such that $y_j^i = 0$ and considering indices $j$ such that $y_j^i = \pm 1$ with probability $1/2$ each independently, $\bx^{\intercal} \by_i$  is the sum of $n/(r+1)$ independent random variables with values in $[-D/(2\sqrt{n}), D/(2\sqrt{n})]$ and expected value $0$, by Hoeffding's inequality (Lemma~\ref{lem:hoeffding}), we get
\begin{align*}
\Pr_{\by_i \sim \U} \left[\left|\bx^{\intercal} \by_i \right| < \epsilon\right] & =
\Pr_{\by_i \sim \U} \left[\left|\bx^{\intercal} \by_i \right| < \frac{D \log n }{2 \sqrt{r+1}}  \right] 
 \geq 1 - 2e^{-\frac{2 \left(\frac{1}{2} D \cdot  \sqrt{1/(r+1)} \cdot \log n\right)^2}{(n/(r+1))(D/\sqrt{n})^2}} \\
& \geq 1 - 2e^{-\frac{ \log^2 n}{2}}  
 = 1 - 2n^{-\frac{\log n}{2}} 
 = 1 - n^{- \omega(1)} \qedhere
\end{align*}
\end{proof}

Lemma~\ref{lem:existence} then follows immediately from Lemmas~\ref{lem:randomization} and \ref{lem:concentration}.

\begin{proof}[Proof of Lemma~\ref{lem:existence}] We combine
 Lemma~\ref{lem:randomization} and Lemma~\ref{lem:concentration}.
 \end{proof}

\subsection{The obfuscating condition}
\label{sec:resistantC}

We show that the obfuscating oracle $f_\A$ together with $f_\by$ and $f_\bz$ defined in Section~\ref{sec:obfuscating}  satisfy the obfuscating condition. By Lemma~\ref{lem:obfuscating}, this implies the indistinguishability condition for $f_\by$ and $f_\bz$.  The main idea to show the obfuscating condition for queries $\bx$ at round $i$ is that for any $j > i$,  $\bx^\intercal_j \by_j$ concentrates with high probability.

\begin{lemma} 
\label{lem:resistant}
Let $\A$ be an $r$-adaptive algorithm, then $f_\A$, $f_\by$, and $f_{\bz}$ satisfy the obfuscating condition: with probability $1 - n^{-\omega(1)}$, for all rounds $i \leq r$ and all queries $\bx$ by $\A$ at round $i$ to obfuscating oracle $f_\A$, 
 $$f_\A(\bx) = \gamma \cdot \max_{j \in [i]} \left(\bx^{\intercal}_j \by_j - 2 j \epsilon\right) = f_{\by}(\bx)     = f_{\bz}(\bx).$$
\end{lemma}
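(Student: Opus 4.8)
The plan is to reduce everything to a single high-probability event and then carry out elementary interval arithmetic with $\epsilon$. For each block index $i\in[r+1]$ define the event $E_i$ that $|\bx_i^\intercal\by_i|<\epsilon$ holds for \emph{every} query $\bx$ in $\cup_{j=1}^{i}\Q_j$ (and in $\cup_{j=1}^{r}\Q_j$ when $i=r+1$). This is exactly the concentration property that the obfuscating oracle demands of the block $\by_i$ it picks, and Lemma~\ref{lem:existence} guarantees such a block exists, so $\Pr[\lnot E_i]=n^{-\omega(1)}$ for each $i$. Since there are $r+1\le n+1$ such events and $n^{-\omega(1)}$ here means $1/n^{\log n}$, a union bound --- which is indifferent to the fact that the $E_i$'s are mutually dependent --- shows that $E:=\bigcap_{i\in[r+1]}E_i$ holds with probability $1-n^{-\omega(1)}$. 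I would condition on $E$ for the rest of the argument.

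Next I would unwind the definitions. By construction, a query $\bx$ issued at round $i\le r$ is answered by $f_\A(\bx)=\gamma\max_{j\in[i]}(\bx_j^\intercal\by_j-2j\epsilon)$, where $\by_1,\dots,\by_{r+1}$ are the same blocks that define $f_\by$; so the content of the lemma is that this truncated maximum over $[i]$ agrees with the full maximum over $[r+1]$, both for $\by$ and for $\bz$. The key estimate is two-sided. On $E$, the $j=i$ term gives $\bx_i^\intercal\by_i-2i\epsilon>-\epsilon-2i\epsilon=-(2i+1)\epsilon$, hence $\max_{j\in[i]}(\bx_j^\intercal\by_j-2j\epsilon)\ge-(2i+1)\epsilon$. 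Conversely, for any $j$ with $i<j\le r+1$, the query $\bx$ was made at a round $\le\min(j,r)$, so it belongs to the collection of queries against which $\by_j$ was required to concentrate --- which is precisely why the construction concentrates $\by_j$ against all of $\cup_{\ell\le j}\Q_\ell$ rather than against $\Q_j$ alone --- and therefore $\bx_j^\intercal\by_j-2j\epsilon<\epsilon-2j\epsilon=-(2j-1)\epsilon\le-(2i+1)\epsilon$, using $j\ge i+1$. Thus no index $j>i$ can attain the maximum over $[r+1]$, which yields $f_\A(\bx)=f_\by(\bx)$. For $f_\bz$, recall $\bz_j=\by_j$ for $j\le r$ and $\bz_{r+1}=-\by_{r+1}$: since $i\le r$ the prefix blocks agree, so $\max_{j\in[i]}(\bx_j^\intercal\bz_j-2j\epsilon)=\max_{j\in[i]}(\bx_j^\intercal\by_j-2j\epsilon)=f_\A(\bx)$, and the identical estimate --- now using $|\bx_{r+1}^\intercal\bz_{r+1}|=|\bx_{r+1}^\intercal\by_{r+1}|<\epsilon$ for the $j=r+1$ term --- shows the extra terms are again dominated, giving $f_\bz(\bx)=f_\A(\bx)$. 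Quantifying over all rounds $i\le r$ and all queries $\bx$, all handled on the single event $E$, proves the obfuscating condition.

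I expect the only real difficulty to be conceptual rather than computational: the query set $\Q_i$ depends on $\by_1,\dots,\by_{i-1}$ through the oracle's earlier answers, while $\by_i$ depends on $\Q_1,\dots,\Q_i$, so one must be careful that the events $E_i$ are well-defined and individually high-probability despite this circularity --- but that is exactly what the inductive construction of the obfuscating oracle together with Lemma~\ref{lem:existence} provides, and the union bound does not see the dependence. A related subtlety, which this approach absorbs automatically, is that $f_\A$ may return different answers to the same point queried in two different rounds and need not a priori correspond to any $f\in\F_r$: on $E$ every answer it ever gives equals $f_\by(\bx)=f_\bz(\bx)$, so both self-consistency and membership in $\F_r$ come for free. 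Everything else is the interval arithmetic with $\epsilon$ sketched above.
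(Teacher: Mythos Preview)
Your proposal is correct and follows essentially the same approach as the paper: both arguments use the concentration property of each $\by_j$ against all queries from rounds $\le j$, take a union bound over the $r+1$ blocks, and then use the interval arithmetic $\bx_i^\intercal\by_i - 2i\epsilon > -(2i+1)\epsilon \ge \bx_j^\intercal\by_j - 2j\epsilon$ for $j>i$ to conclude that the truncated maximum equals the full one, with the $\bz_{r+1}=-\by_{r+1}$ observation handling $f_\bz$. Your write-up is in fact somewhat more explicit about the $\epsilon$-arithmetic and the dependency subtleties than the paper's own proof.
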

\begin{proof}
Consider  round $i$ of the algorithm querying the obfuscating oracle $f$. By definition of $\by_j$, for $j \geq i$, 
$$\Pr_{\bx \in \cup_{\ell=1}^j \Q_\ell}\left[\left|\bx^\intercal_j \by_j\right| < \epsilon, \text{ for all } \bx \in \cup_{\ell=1}^j \Q_\ell\right] = 1 - n^{-\omega(1)}.$$
In particular, this implies that for $j \geq i$, we have
$$\Pr_{\bx \in \Q_i}\left[\left|\bx^\intercal_j \by_j\right| < \epsilon, \text{ for all } \bx \in \Q_i\right] = 1 - n^{-\omega(1)}.$$
By a union bound, we get
$$\Pr_{\bx \in \Q_i}\left[\left|\bx^\intercal_j \by_j \right| < \epsilon, \text{ for all } \bx \in \Q_i \text{ and for all } j \geq i\right] = 1 - n^{-\omega(1)}.$$
Assume that $|\bx^\intercal_j \by_j| < \epsilon, \text{ for all } \bx \in \Q_i \text{ and for all } j \geq i$. This implies that
$$\bx^\intercal_i \by_i - 2i\epsilon > \bx^\intercal_j \by_j - 2j \epsilon$$
for all $ \bx \in \Q_i$ and for all $j > i$.  If $|\bx^\intercal_{r+1} \by_{r+1}| < \epsilon$ then it is also the case that $|\bx^\intercal_{r+1} (-\by_{r+1})| < \epsilon$. Thus, 
 $$\gamma \cdot \max_{\ell \in [i]} \left(\bx^{\intercal}_\ell \by_\ell - 2 \ell \epsilon\right) =  f_{\by}(\bx) =   f_{\bz}(\bx) .$$
\end{proof}

\subsection{The gap condition}
\label{sec:gapC}

We show that $f_\by$ and $f_\bz$  satisfy the $\alpha$-gap condition with $\alpha = GD\left(\frac{1}{2\sqrt{r+1}} -  \frac{(r+1/2) \log n}{\sqrt{n}} \right)$. The main observation for the gap condition is that for all $\bx$ and  $\by_{r+1}$, $$\max(\bx^{\intercal}_{r+1} \by_{r+1},  \bx^{\intercal}_{r+1} \bz_{r+1}) = \max(\bx^{\intercal}_{r+1} \by_{r+1}, - \bx^{\intercal}_{r+1} \by_{r+1}) \geq 0.$$ Thus, for all $\by$ there is no $\bx$ which is a good solution to both $f_{\by}(\bx)$ and  $ f_{\bz}(\bx)$.

\begin{lemma} 
\label{lem:gapC} Assume $\gamma = \sqrt{(r+1)/n} \cdot G$.
For any $\by \in \{-1,1\}^{n}$,
$\min_\bx f_{\by}(\bx) = \min_\bx f_{\bz}(\bx)$
 and for all $\bx \in \mathcal{W}$,
$$\max(f_{\by}(\bx), f_{\bz}(\bx)) - \min_\bx f_{\by}(\bx) \geq G D\left( \frac{1}{2\sqrt{r+1}} -  \frac{(r+1/2) \log n}{\sqrt{n}} \right).$$
\end{lemma}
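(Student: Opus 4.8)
The plan is to prove the two assertions by direct computation, exploiting that $f_\by$ is a pointwise maximum of terms $\bx_i^\intercal\by_i - 2i\epsilon$ indexed by the $r+1$ disjoint coordinate blocks, and that $\bz$ agrees with $\by$ on all blocks but the last. First I would identify the minimizer of $f_\by$ over $\mathcal{W}$. Because $\mathcal{W}$ is the product box $[-D/(2\sqrt{n}),D/(2\sqrt{n})]^n$ and the $i$-th term depends only on block $i$, the point $\bx^\star$ with $\bx^\star_j = -D/(2\sqrt{n})$ when $y_j=1$ and $\bx^\star_j = D/(2\sqrt{n})$ when $y_j=-1$ simultaneously minimizes every term: it achieves $(\bx^\star_i)^\intercal\by_i = -\frac{\sqrt{n}D}{2(r+1)}$, the smallest value of $\bx_i^\intercal\by_i$ attainable on $\mathcal{W}$, for each $i\in[r+1]$. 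Hence $\bx^\star$ minimizes $f_\by$; since $-2i\epsilon$ is decreasing in $i$ the maximum is attained at $i=1$, so $\min_\bx f_\by(\bx) = \gamma\bigl(-\tfrac{\sqrt{n}D}{2(r+1)} - 2\epsilon\bigr)$. Running the same argument for $f_\bz$ — whose minimizer is $\bx^\star$ with its last block negated, since $\bz_{r+1}=-\by_{r+1}$ — gives the identical value, establishing $\min_\bx f_\by(\bx) = \min_\bx f_\bz(\bx)$.

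For the gap, fix an arbitrary $\bx\in\mathcal{W}$ and retain only the $(r+1)$-st term in each maximum: $f_\by(\bx)\ge\gamma\bigl(\bx_{r+1}^\intercal\by_{r+1}-2(r+1)\epsilon\bigr)$ and $f_\bz(\bx)\ge\gamma\bigl(\bx_{r+1}^\intercal\bz_{r+1}-2(r+1)\epsilon\bigr) = \gamma\bigl(-\bx_{r+1}^\intercal\by_{r+1}-2(r+1)\epsilon\bigr)$. Taking the larger of these two lower bounds and using the observation highlighted before the statement, $\max(\bx_{r+1}^\intercal\by_{r+1},-\bx_{r+1}^\intercal\by_{r+1}) = |\bx_{r+1}^\intercal\by_{r+1}| \ge 0$, yields $\max(f_\by(\bx),f_\bz(\bx)) \ge -2(r+1)\gamma\epsilon$. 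Subtracting $\min_\bx f_\by(\bx)$ from the first part leaves $\gamma\tfrac{\sqrt{n}D}{2(r+1)} - 2r\gamma\epsilon$, and plugging in $\gamma=\sqrt{(r+1)/n}\cdot G$ and $\epsilon=\tfrac{D\log n}{2\sqrt{r+1}}$ turns this into $\tfrac{GD}{2\sqrt{r+1}} - \tfrac{rGD\log n}{\sqrt{n}}$, which is at least the claimed bound $GD\bigl(\tfrac{1}{2\sqrt{r+1}} - \tfrac{(r+1/2)\log n}{\sqrt{n}}\bigr)$ since $r \le r+\tfrac12$.

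There is no deep obstacle here — the lemma is essentially bookkeeping — but the point requiring the most care is the first step: arguing that one point $\bx^\star$ minimizes the pointwise maximum $\max_i(\bx_i^\intercal\by_i - 2i\epsilon)$ rather than merely some individual term, which hinges on the coordinate blocks being disjoint so that the terms can be minimized coordinate-by-coordinate in parallel. One should also verify $\bx^\star\in\mathcal{W}$ (its coordinates are exactly the box endpoints $\pm D/(2\sqrt{n})$) and carefully track the constants through the final substitution; note that the argument in fact yields the slightly stronger bound with $r$ in place of $r+1/2$, so the stated inequality follows a fortiori.
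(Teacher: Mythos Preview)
Your proof is correct and follows essentially the same route as the paper: identify the minimizer $\bx^\star$ by minimizing each block term simultaneously, lower bound $\max(f_\by(\bx),f_\bz(\bx))$ via the $(r+1)$-st term using $\max(t,-t)\ge 0$, and substitute the values of $\gamma$ and $\epsilon$. Your bookkeeping is in fact slightly tighter than the paper's---you correctly use $-2\epsilon$ (rather than $-\epsilon$) for the $i=1$ term at $\bx^\star$, which is why you end up with the constant $r$ in place of $r+\tfrac12$; the paper's stated bound follows a fortiori, exactly as you note.
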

\begin{proof}
A minimizer  for $f_{\by}$ is $\bx^{\star}$ such that  $x^{\star}_j =  - D / (2\sqrt{n})$ if $y_j = 1$ and $x^{\star}_j =   D / (2\sqrt{n})$ if $y_j = - 1$. With $\epsilon = (D \log n)/(2\sqrt{r+1})$,
$$f(\bx^{\star}) = \gamma \cdot \left((\bx^{\star}_{1})^{\intercal} \by_{1} - \frac{D  \log n}{2 \sqrt{r+1}}\right) =   \gamma \cdot \left(-\frac{\sqrt{n}D }{2(r+1)} - \frac{D  \log n}{2 \sqrt{r+1}}\right).$$
We construct a minimizer $\bx^{\star}$ for $f_{\bz}$ similarly  and get  $$\gamma \cdot \left(-\frac{\sqrt{n}D }{2(r+1)} - \frac{D  \log n}{2 \sqrt{r+1}}\right) =  \min_\bx f_{\by}(\bx) = \min_\bx f_{\bz}(\bx).$$
By the definition of $f_{\by}$, for all $\bx$, we have
\begin{align*}
\max(f_{\by}(\bx), f_{\bz}(\bx))  \geq \gamma \cdot \max(\bx^{\intercal}_{r+1} \by_{r+1}, - \bx^{\intercal}_{r+1} \by_{r+1})  - \gamma 2(r+1) \epsilon 
 \geq - \gamma 2(r+1) \epsilon .
\end{align*}

Thus, for all $\bx$, with $\epsilon = (D \log n)/(2\sqrt{r+1})$ and $\gamma = \sqrt{(r+1)/n} \cdot G$,
\begin{align*}
\max(f_{\by}(\bx), f_{\bz}(\bx)) - f_{\by}(\bx^{\star})
  & \geq  \gamma \left( - \sqrt{r+1} D  \log n +\frac{\sqrt{n}D }{2(r+1)} + \frac{D  \log n}{2 \sqrt{r+1}}\right) \\
 & =  G D\left( \frac{1}{2\sqrt{r+1}} -  \frac{(r+1/2) \log n}{\sqrt{n}} \right). \qedhere
\end{align*}
\end{proof}

\subsection{The subgradients of $\F_r$}
\label{sec:subgradients}

It remains to bound the subgradients of the functions in $\F_r$.  We use $\gamma = \sqrt{\frac{r+1}{n}} \cdot G$ and the following standard lemma for subdifferentials.

\begin{lemma}[\cite{nesterov2013introductory}, Lemma 3.1.10]
\label{lem:nesterov}
Let the function $f_i(x), i = 1, \ldots, m,$ be closed and convex. Then the function $f(\bx) = \max_{i \in [m]} f_i(\bx)$ is also closed and convex. For any $\bx \in \intt(\dom f) = \cap_{i\in [m]} \intt (\dom f_i)$ we have
$\partial f(\bx) = \conv \{\partial f_i(\bx) | i \in I(\bx)\}$
where $I(\bx) = \{i : f_i(\bx) = f(\bx)\}$.
\end{lemma}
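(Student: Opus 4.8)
The plan is to establish the two parts of the statement separately: the closedness and convexity of $f$, which is routine, and the convex-hull formula for $\partial f(\bx)$, which is the substantive part. Throughout, ``closed'' means the epigraph is closed, equivalently (for convex functions) lower semicontinuous.

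\textbf{Closedness and convexity.} I would first observe that $\mathrm{epi}\, f = \bigcap_{i\in[m]} \mathrm{epi}\, f_i$, since a pair $(\bx,t)$ satisfies $f(\bx)\le t$ iff $f_i(\bx)\le t$ for every $i$. A finite intersection of closed convex sets is closed and convex, so $\mathrm{epi}\, f$ is, hence $f$ is closed and convex. In particular $\dom f=\bigcap_i \dom f_i$; and since $\bigcap_i \intt(\dom f_i)$ is open and contained in $\bigcap_i\dom f_i = \dom f$ it lies in $\intt(\dom f)$, while conversely a ball around a point of $\intt(\dom f)$ lies in each $\dom f_i$, so $\intt(\dom f)=\bigcap_i\intt(\dom f_i)$.

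\textbf{The subdifferential formula.} Fix $\bx_0\in\intt(\dom f)$ and write $I=I(\bx_0)$. The inclusion $\conv\{\partial f_i(\bx_0):i\in I\}\subseteq\partial f(\bx_0)$ is easy: if $\bg\in\partial f_i(\bx_0)$ with $i\in I$, then for all $\by$, $f(\by)\ge f_i(\by)\ge f_i(\bx_0)+\langle\bg,\by-\bx_0\rangle=f(\bx_0)+\langle\bg,\by-\bx_0\rangle$, so $\bg\in\partial f(\bx_0)$; convexity of $\partial f(\bx_0)$ then gives the full convex hull. For the reverse inclusion I would argue through directional derivatives and support functions. Since a convex function is continuous on the interior of its domain, each $f_i$ and $f$ is continuous at $\bx_0$, and for $i\notin I$ we have $f_i(\bx_0)<f(\bx_0)$; hence there is a neighborhood $U$ of $\bx_0$ on which $f(\bx)=\max_{i\in I}f_i(\bx)$. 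Along any ray $\bx_0+t\bd$ with $t>0$ small, using $f_i(\bx_0)=f(\bx_0)$ for $i\in I$, the difference quotient of $f$ equals $\max_{i\in I}\tfrac{f_i(\bx_0+t\bd)-f_i(\bx_0)}{t}$, and letting $t\to 0^+$ (the maximum of finitely many convergent quantities converges to the maximum of the limits) yields $f'(\bx_0;\bd)=\max_{i\in I}f_i'(\bx_0;\bd)$ for every direction $\bd$.

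Now I would invoke the standard facts that at an interior point of the domain the subdifferential of a convex function is nonempty, convex and compact, and that its support function is precisely the directional derivative map, i.e. $h'(\bx_0;\bd)=\max_{\bg\in\partial h(\bx_0)}\langle\bg,\bd\rangle$. Set $C:=\conv\bigl(\bigcup_{i\in I}\partial f_i(\bx_0)\bigr)$, which is nonempty, convex and compact as the convex hull of a finite union of compact convex sets. Since the support function of a union equals the maximum of the support functions, and passing to the convex hull does not change the support function, $\sigma_C(\bd)=\max_{i\in I}f_i'(\bx_0;\bd)=f'(\bx_0;\bd)=\sigma_{\partial f(\bx_0)}(\bd)$ for all $\bd$. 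As a nonempty compact convex set is determined by its support function, $\partial f(\bx_0)=C$, which together with the easy inclusion completes the proof. I expect the delicate step to be the identity $f'(\bx_0;\bd)=\max_{i\in I}f_i'(\bx_0;\bd)$ — justifying the local identity $f=\max_{i\in I}f_i$ near $\bx_0$ via continuity on the interior of the domain, and the interchange of maximum with the limit in $t$ — together with the careful use of the nonemptiness/compactness of subdifferentials at interior points and the support-function duality; the remaining bookkeeping is elementary.
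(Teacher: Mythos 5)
The paper does not actually prove this lemma---it is quoted verbatim from Nesterov's book (Lemma 3.1.10) and used as a black box, so there is no in-paper proof to compare against. Your argument is correct and is essentially the standard textbook proof of this result: epigraph intersection for closedness/convexity, continuity on $\intt(\dom f)$ to reduce locally to the active indices, the directional-derivative identity $f'(\bx_0;\bd)=\max_{i\in I}f_i'(\bx_0;\bd)$, and support-function duality for nonempty compact convex subdifferentials at interior points.
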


We bound the norm of  subgradients $\bg$ of functions $f \in \F_r$ using the above lemma.
\begin{lemma}
\label{lem:subgradients}
Let $\bg \in \partial f(\bx)$ for any $f \in \F_r$ and any $\bx$.  If $\gamma = \sqrt{\frac{r+1}{n}} \cdot G$, then $\|\bg\|^2 \leq G^2$.
\end{lemma}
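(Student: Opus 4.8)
The plan is to write $f_{\by}$ as a pointwise maximum of affine functions, apply Lemma~\ref{lem:nesterov}, and then exploit that the gradients of the affine pieces have pairwise disjoint supports.

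First I would fix $\by \in \{-1,1\}^n$ and, for each $i \in [r+1]$, set $f_i(\bx) := \gamma\left(\bx_i^\intercal \by_i - 2i\epsilon\right)$, so that $f_{\by} = \max_{i \in [r+1]} f_i$. Each $f_i$ is affine on all of $\R^n$, hence closed and convex with $\dom f_i = \R^n$; consequently $\intt(\dom f_{\by}) = \R^n$ and Lemma~\ref{lem:nesterov} applies at every $\bx$. The gradient of $f_i$ is the vector $\bg^{(i)} \in \R^n$ equal to $\gamma \by_i$ on the block of coordinates $\{(i-1)\tfrac{n}{r+1}+1, \dots, i\tfrac{n}{r+1}\}$ and $0$ elsewhere. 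Since $\by_i \in \{-1,1\}^{n/(r+1)}$, we have $\|\bg^{(i)}\|^2 = \gamma^2 \cdot \tfrac{n}{r+1} = G^2$ by the choice $\gamma = \sqrt{(r+1)/n}\cdot G$.

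Next, fix $\bx$ and let $I(\bx) = \{i : f_i(\bx) = f_{\by}(\bx)\}$. By Lemma~\ref{lem:nesterov}, any $\bg \in \partial f_{\by}(\bx)$ can be written as $\bg = \sum_{i \in I(\bx)} \lambda_i \bg^{(i)}$ with $\lambda_i \geq 0$ and $\sum_{i \in I(\bx)} \lambda_i = 1$. The key observation is that the vectors $\{\bg^{(i)}\}_{i \in [r+1]}$ have pairwise disjoint supports (block $i$), hence are mutually orthogonal, so
$$\|\bg\|^2 = \Big\|\sum_{i \in I(\bx)} \lambda_i \bg^{(i)}\Big\|^2 = \sum_{i \in I(\bx)} \lambda_i^2 \,\|\bg^{(i)}\|^2 = G^2 \sum_{i \in I(\bx)} \lambda_i^2 \leq G^2\Big(\sum_{i \in I(\bx)} \lambda_i\Big)^2 = G^2,$$
where the inequality uses $\lambda_i \geq 0$. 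This yields $\|\bg\|^2 \leq G^2$.

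The argument is a direct computation and there is no genuine obstacle; the only points needing a moment's care are checking that Lemma~\ref{lem:nesterov} applies at all $\bx$ (immediate, since the $f_i$ are finite affine functions on $\R^n$) and the orthogonality of the $\bg^{(i)}$, which is precisely what makes the convex-combination bound collapse to $G^2$ rather than $(r+1)G^2$.
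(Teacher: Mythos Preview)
Your proof is correct and essentially identical to the paper's: both apply Lemma~\ref{lem:nesterov} to write $\bg$ as a convex combination of the block gradients $\gamma\by_i$, use the disjoint-support (orthogonality) structure to get $\|\bg\|^2 = \tfrac{n}{r+1}\gamma^2\sum_i \alpha_i^2$, and then bound $\sum_i \alpha_i^2 \le 1$. The only cosmetic difference is that the paper computes $\|\bg\|^2$ coordinatewise and bounds $\sum_i \alpha_i^2 \le \sum_i \alpha_i$, whereas you invoke orthogonality explicitly and bound $\sum_i \lambda_i^2 \le (\sum_i \lambda_i)^2$.
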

\begin{proof}
Let $\bg \in \partial f(\bx)$ for some $\bx$, then by Lemma~\ref{lem:nesterov}, $\bg \in \conv \{\gamma \by_i | i \in I(\bx)\}.$ Thus, $$\bg = \gamma \sum_{i \in [r]} \alpha_i \by_i$$ for some $\alpha_1, \ldots, \alpha_r \geq 0 $ such that $\sum_{i \in [r+1]} \alpha_i = 1$, and we get
$$\|\bg\|^2 = \sum_{i \in [r+1]} \sum_{j \in \left[(i-1)\frac{n}{r+1}+1: i\frac{n}{r+1}\right]} (\gamma \alpha_i)^2 =  \frac{n}{r+1} \gamma^2 \sum_{i \in [r+1]}  (\alpha_i)^2 \leq \frac{n}{r+1} \gamma^2 \sum_{i \in [r+1]}  \alpha_i \leq \frac{n}{r+1} \gamma^2 = G^2.$$
\end{proof}

\subsection{Main result}
\label{sec:mainresult}

We are now ready to show our main result.

\thmmaintwo*
\begin{proof}
Consider $\F_r$ from Section~\ref{sec:resistantconstruction}. By Lemma~\ref{lem:subgradients}, $\|\bg\|^2 \leq G^2$ for all subgradients $\bg$ of all $f \in \F_r$. By Lemma~\ref{lem:resistant} and \ref{lem:gapC}, $\F_r$ satisfies the indistinguishability condition  and $\alpha$-gap conditions  with $\alpha= G D\left( \frac{1}{2\sqrt{r+1}} -  \frac{(r+1/2) \log n}{\sqrt{n}} \right)$. Thus, by Theorem~\ref{thm:meta}, there is no $r$-adaptive algorithm that finds for all $f \in \F_r$, with probability $\omega(1/n)$ over the randomization of the algorithm, a solution $\bx_r$ s.t.
$f(\bx_r) - \min_{\bx \in \mathcal{W}}f(\bx) \leq GD \left(\frac{1}{2\sqrt{r+1}} - \frac{(r+1/2)\log n}{\sqrt{n}}\right)$
over domain $\mathcal{W} = \left[-\frac{D}{2\sqrt{n}}, \frac{D}{2\sqrt{n}}\right]^n$.
\end{proof}

% !TeX root = main.tex

\subsubsection*{Strongly convex functions}
\label{sec:stronglyconvex}

We extend the previous result to  $\lambda$-strongly convex functions. The hard family of functions $\F_r^\lambda$ for strongly convex functions is similar to $\F_r$, but is defined with an additional $\frac{\lambda}{2} \|\bx\|^2$ additive terms in the functions to obtain $\lambda$-strongly convex functions. Formally,
$$f^\lambda_{\by}(\bx) := \sqrt{\frac{r+1}{n}}  \cdot \frac{G}{2} \cdot  \max_{i \in [r+1]} \left(\bx^{\intercal}_i \by_i -  2 i  \epsilon\right) + \frac{\lambda}{2} \|\bx\|^2$$
The functions $f^\lambda_{\by}(\bx)$  are $\lambda$-strongly convex since $\frac{\lambda}{2} \|\bx\|^2$ is $\lambda$-strongly convex and  the sum of a convex function and a $\lambda$-strongly convex function is a $\lambda$-strongly convex function. Similarly as for $\F_r$, the family of functions is
$$\F^{\lambda}_r := \{f^{\lambda}_{\by} : \by \in \{-1,1\}^{n}\}.$$

We consider the identical construction of $\by_1, \ldots, \by_{r+1}, \bz_{r+1}$ for the obfuscating oracle $f_\A$, $f_\by$, and $f_\bz$ as for $\F_r$ to show the indistinguishability and gap properties. The $\alpha$-gap property holds for a different $\alpha$.

\begin{lemma} 
\label{lem:gapC2}
For any $\by \in \{-1,1\}^{n}$,
$\min_\bx f^\lambda_{\by}(\bx) = \min_\bx f^\lambda_{\bz}(\bx)$
 and for all $\bx \in \mathcal{W}$,
$$\max(f^\lambda_{\by}(\bx), f^\lambda_{\bz}(\bx)) - \min_\bx f^\lambda_{\by}(\bx) \geq   \frac{G^2}{\lambda}\left(\frac{1 }{8(r+1)} - \sqrt{\frac{r+ 1}{n}}\frac{\log n}{2}\right) .$$
\end{lemma}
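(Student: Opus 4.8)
The plan is to mirror the structure of Lemma~\ref{lem:gapC}, adapting every step to account for the additive quadratic term $\frac{\lambda}{2}\|\bx\|^2$. First I would compute the minimizer of $f^\lambda_{\by}$ over $\mathcal{W}$. Since the domain is the box $\left[-\frac{G}{2\lambda\sqrt{n(r+1)}}, \frac{G}{2\lambda\sqrt{n(r+1)}}\right]^n$, I expect the minimizer to be coordinatewise and, in the $(r+1)$-th block direction, to balance the linear term $\gamma \bx_1^\intercal\by_1$ (with $\gamma = \sqrt{(r+1)/n}\cdot G/2$, the prefactor in $f^\lambda_{\by}$) against the quadratic term; a candidate is $x^\star_j = -\frac{\gamma}{\lambda} y_j$ in the first block, clipped to the box boundary if necessary. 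I would check whether the unconstrained minimizer of the relevant single piece lies inside $\mathcal{W}$; the box half-width is chosen precisely so that $\gamma/\lambda = \frac{G}{2\lambda}\sqrt{(r+1)/n}$ coincides with the box half-width times $\sqrt{\ldots}$—so most likely the minimizer sits exactly at the box corner, making $x^\star_j = \pm\frac{G}{2\lambda\sqrt{n(r+1)}}$ with the sign opposite to $y_j$, exactly as in Lemma~\ref{lem:gapC}. As before, because $\bz$ agrees with $\by$ on the first $r$ blocks (and in particular on block $1$, which governs the max at the minimizer), $\min_\bx f^\lambda_{\by}(\bx) = \min_\bx f^\lambda_{\bz}(\bx)$.

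Second, I would lower bound $\max(f^\lambda_{\by}(\bx), f^\lambda_{\bz}(\bx))$ for arbitrary $\bx \in \mathcal{W}$. The key observation carries over verbatim: $\max(\bx_{r+1}^\intercal \by_{r+1}, -\bx_{r+1}^\intercal \by_{r+1}) \geq 0$, so that $\max(f^\lambda_{\by}(\bx), f^\lambda_{\bz}(\bx)) \geq \gamma\cdot 0 - \gamma\cdot 2(r+1)\epsilon + \frac{\lambda}{2}\|\bx\|^2 \geq -2\gamma(r+1)\epsilon$, where I drop the nonnegative $\frac{\lambda}{2}\|\bx\|^2$. Then I subtract $\min_\bx f^\lambda_{\by}(\bx)$, which consists of a (negative) linear contribution from block $1$, the $-2\epsilon$ shift, and the (positive) quadratic contribution $\frac{\lambda}{2}\|\bx^\star\|^2$. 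The quadratic contribution only helps (it makes $\min f^\lambda_{\by}$ larger, hence the gap smaller), so I would keep track of it carefully rather than discard it—it is what converts the $1/\sqrt{r}$-type bound of Lemma~\ref{lem:gapC} into the $1/r$-type bound claimed here. Plugging $\epsilon = \frac{D\log n}{2\sqrt{r+1}}$ (with $D$ now the diameter of the strongly-convex box, $D = \frac{G}{\lambda}\sqrt{1/(r+1)}$, or rather $2\cdot\frac{G}{2\lambda\sqrt{n(r+1)}}\cdot\sqrt{n} = \frac{G}{\lambda\sqrt{r+1}}$) and $\gamma = \frac{G}{2}\sqrt{(r+1)/n}$, I would simplify and collect terms to arrive at $\frac{G^2}{\lambda}\left(\frac{1}{8(r+1)} - \sqrt{\frac{r+1}{n}}\frac{\log n}{2}\right)$.

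The main obstacle I anticipate is purely bookkeeping: correctly identifying the diameter $D$ of the strongly-convex domain in terms of $G, \lambda, n, r$, and keeping the quadratic term's contribution to $\min_\bx f^\lambda_{\by}$ straight throughout the algebra. Because the box half-width now depends on $r$ (unlike the $\F_r$ case where it was a free parameter $D$), the value $\|\bx^\star\|^2 = n\cdot\left(\frac{G}{2\lambda\sqrt{n(r+1)}}\right)^2 = \frac{G^2}{4\lambda^2(r+1)}$, so $\frac{\lambda}{2}\|\bx^\star\|^2 = \frac{G^2}{8\lambda(r+1)}$; this is exactly the $\frac{1}{8(r+1)}$ term, which is reassuring. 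The linear term at the minimizer, $\gamma\cdot(\bx^\star_1)^\intercal\by_1 = -\gamma\cdot\frac{n}{r+1}\cdot\frac{G}{2\lambda\sqrt{n(r+1)}} = -\frac{G^2}{4\lambda(r+1)}\cdot\sqrt{\ldots}$—I would need to verify the exponents cancel to give a clean multiple of $\frac{G^2}{\lambda(r+1)}$; if the linear and quadratic contributions at $\bx^\star$ combine to $-\frac{G^2}{8\lambda(r+1)}$ net (after the $+\frac{\lambda}{2}\|\bx^\star\|^2$), then subtracting it from $-2\gamma(r+1)\epsilon$ yields the stated bound. Everything else is a direct transcription of the $\F_r$ argument, so once the constants are pinned down the proof closes immediately.
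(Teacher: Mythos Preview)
Your proposal is correct and follows essentially the same route as the paper: lower-bound $\max(f^\lambda_\by,f^\lambda_\bz)$ by $-2\gamma(r+1)\epsilon$ via $\max(t,-t)\ge 0$ and dropping the nonnegative quadratic term, upper-bound $\min_\bx f^\lambda_\by$ by evaluating at the box corner $\bx^\star$, and then substitute $D=\frac{G}{\lambda\sqrt{r+1}}$ and $\epsilon=\frac{D\log n}{2\sqrt{r+1}}$. One small simplification the paper makes that you can adopt: you do not need to determine the actual minimizer of $f^\lambda_\by$ (your boundary-vs-interior discussion is unnecessary); it suffices to plug in the corner point $\bx^\star$ to get the upper bound $\min_\bx f^\lambda_\by(\bx)\le -\frac{G^2}{4\lambda(r+1)}+\frac{G^2}{8\lambda(r+1)}=-\frac{G^2}{8\lambda(r+1)}$, and the equality $\min_\bx f^\lambda_\by=\min_\bx f^\lambda_\bz$ follows directly from the symmetry $\bx_{r+1}\mapsto -\bx_{r+1}$ of the domain rather than from identifying the minimizer.
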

\begin{proof}
By the definition of $f^\lambda_{\by}$, for all $\bx$, similarly as in Lemma~\ref{lem:gapC}, $$\max(f^\lambda_{\by}(\bx), f^\lambda_{\by,-\by_{r+1}}(\bx)) \geq  -  \sqrt{\frac{r+1}{n}}  G (r+1) \epsilon.$$ Let $\bx^{\star}$ be defined as in Lemma~\ref{lem:gapC}. Then,
$$f^\lambda_{\by,-\by_{r+1}}(\bx^{\star}) = \sqrt{\frac{r+1}{n}} \frac{G}{2} \left((\bx^{\star}_1)^{\intercal} \by_1 - \frac{D  \log n}{2 \sqrt{r+1}}\right) + \frac{\lambda}{2} \|\bx\|^2 \leq  - \sqrt{\frac{r+1}{n}}   \frac{G \sqrt{n}D }{4(r+1)}  + \frac{\lambda D^2}{8}$$
and
 $$ \min_\bx f^\lambda_{\by}(\bx) =  \min_\bx f^\lambda_{\bz}(\bx) \leq - \sqrt{\frac{r+1}{n}} G  \frac{\sqrt{n}D }{2(r+1)}  + \frac{\lambda D^2}{8}.$$
With $\epsilon = \frac{D \log n}{2 \sqrt{r+1}}$ and $D = \frac{G}{\lambda \sqrt{r+1}}$, we conclude that
\begin{align*}
 \max(f^\lambda_{\by}(\bx), f^\lambda_{\bz}(\bx)) - \min_\bx f^\lambda_{\by}(\bx)  & \geq  -  \sqrt{\frac{r+1}{n}}  G (r+1) \epsilon  + \sqrt{\frac{r+1}{n}}   \frac{G\sqrt{n}D }{4(r+1)} - \frac{\lambda D^2}{8}\\ 
 &=  \sqrt{\frac{r+1}{n}}  G \left(\frac{\sqrt{n}D }{4(r+1)}  -   (r+1) \epsilon \right) -    \frac{\lambda D^2}{8} \\
 &=  \sqrt{\frac{r+1}{n}}  G \left(\frac{\sqrt{n}D }{4(r+1)}  -\frac{D}{2} \log n \sqrt{r+1} \right) -    \frac{\lambda D^2}{8} \\
 &=  GD \left(\frac{1 }{4\sqrt{r+1}} - \frac{(r+1)\log n}{2\sqrt{n}} \right) -    \frac{\lambda D^2}{8} \\
 &=  \frac{G^2}{\lambda}\left(\frac{1 }{4(r+1)} - \frac{\sqrt{r+ 1}\log n}{2\sqrt{n}} \right) -    \frac{G^2}{8\lambda(r+1)} \\
  &=  \frac{G^2}{\lambda}\left(\frac{1 }{8(r+1)} - \sqrt{\frac{r+ 1}{n}}\frac{\log n}{2}\right). 
\end{align*}
\end{proof}

We bound the norm of  subgradients $\bg$ of functions $f \in \F^{\lambda}_r$.
\begin{lemma}
\label{lem:subgradientsStrongly}
Let $\bg \in \partial f(\bx)$ for any $f \in \F_r$ and any $\bx \in \mathcal{W}$.  If $D = \frac{G}{\lambda \sqrt{r+1}}$, then $\|\bg\|^2 \leq G^2$.
\end{lemma}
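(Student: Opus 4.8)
The plan is to split a subgradient of $f^\lambda_\by$ into the part coming from the piecewise-linear term and the part coming from the quadratic regularizer, bound each separately, and combine them by the triangle inequality. Write $f^\lambda_\by(\bx) = h_\by(\bx) + \frac{\lambda}{2}\|\bx\|^2$, where $h_\by(\bx) = \gamma \cdot \max_{i \in [r+1]}(\bx^\intercal_i \by_i - 2i\epsilon)$ with $\gamma = \sqrt{(r+1)/n}\cdot G/2$. Since $\frac{\lambda}{2}\|\bx\|^2$ is everywhere differentiable with gradient $\lambda \bx$, the subdifferential sum rule gives that every $\bg \in \partial f^\lambda_\by(\bx)$ has the form $\bg = \bg_1 + \lambda\bx$ with $\bg_1 \in \partial h_\by(\bx)$.

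For the first term I would repeat the argument of Lemma~\ref{lem:subgradients} verbatim, now with the constant $\gamma = \sqrt{(r+1)/n}\cdot G/2$: by Lemma~\ref{lem:nesterov}, $\bg_1 = \gamma\sum_{i} \alpha_i \by_i$ for nonnegative weights $\alpha_i$ summing to $1$ supported on the active blocks, and since the blocks $\by_i$ have disjoint supports of size $n/(r+1)$ with $\pm1$ entries,
\[
\|\bg_1\|^2 = \frac{n}{r+1}\gamma^2 \sum_i \alpha_i^2 \le \frac{n}{r+1}\gamma^2 = \frac{G^2}{4},
\]
using $\sum_i \alpha_i^2 \le \sum_i \alpha_i = 1$; hence $\|\bg_1\| \le G/2$. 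The only change from Lemma~\ref{lem:subgradients} is the extra factor $1/2$ in $\gamma$, which is exactly what leaves room for the regularizer.

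For the second term I would use the domain constraint $\bx \in \mathcal{W} = [-D/(2\sqrt n), D/(2\sqrt n)]^n$, which gives $\|\bx\|^2 \le n\cdot(D/(2\sqrt n))^2 = D^2/4$, so $\|\lambda\bx\| \le \lambda D/2$. Substituting $D = G/(\lambda\sqrt{r+1})$ yields $\|\lambda\bx\| \le G/(2\sqrt{r+1}) \le G/2$ since $r \ge 0$. Combining by the triangle inequality, $\|\bg\| \le \|\bg_1\| + \|\lambda\bx\| \le G/2 + G/2 = G$, i.e.\ $\|\bg\|^2 \le G^2$. There is no real obstacle here; the only point needing care is tracking the halved coefficient in $h_\by$ so that the regularizer's contribution on $\mathcal{W}$ stays within the norm budget — the choice $D = G/(\lambda\sqrt{r+1})$ is calibrated precisely for this, and it is also the value used in Lemma~\ref{lem:gapC2}.
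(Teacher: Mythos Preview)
Your proof is correct and follows essentially the same approach as the paper: both use the subdifferential sum rule to write $\bg=\bg_1+\lambda\bx$ with $\bg_1\in\partial h_\by(\bx)$, invoke Lemma~\ref{lem:nesterov} to bound $\|\bg_1\|$, and use the box domain to bound $\|\lambda\bx\|$. The only difference is cosmetic: the paper expands $\|\bg\|^2$ coordinate by coordinate and carries the cross term to get $\frac{G^2}{4}+\frac{G^2}{2(r+1)}+\frac{G^2}{4(r+1)}\le G^2$, whereas your triangle-inequality combination $\|\bg\|\le \frac{G}{2}+\frac{G}{2\sqrt{r+1}}\le G$ is a bit cleaner.
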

\begin{proof}
Let $\bg \in \partial f(\bx)$ for some $\bx$, then by Lemma~\ref{lem:nesterov}, 
 $$\bg = \sqrt{\frac{r+1}{n}}   \frac{G}{2}   \sum_{i \in [r]} \alpha_i \by_i + \lambda \bx$$ 
 for some $\alpha_1, \ldots, \alpha_r \geq 0 $ such that $\sum_{i \in [r+1]} \alpha_i = 1$. Thus,
\begin{align*}
\|\bg\|^2 & = \sum_{i \in [r+1]} \sum_{j \in \left[(i-1)\frac{n}{r+1}+1: i\frac{n}{r+1}\right]} (\sqrt{\frac{r+1}{n}}   \frac{G}{2} \alpha_i + \lambda x_j)^2 \\
& \leq   \sum_{i \in [r+1]} \sum_{j \in \left[(i-1)\frac{n}{r+1}+1: i\frac{n}{r+1}\right]} \left(\sqrt{\frac{r+1}{n}}   \frac{G}{2} \alpha_i + \lambda \frac{D}{2\sqrt{n}}\right)^2 \\
& =   \sum_{i \in [r+1]} \sum_{j \in \left[(i-1)\frac{n}{r+1}+1: i\frac{n}{r+1}\right]} \left(\frac{r+1}{n}  \frac{G^2}{4} \alpha_i^2 +  \sqrt{r+1}   \alpha_i \lambda \frac{GD}{2n}+  \frac{(\lambda D)^2}{4n}\right) \\
& \leq   \sum_{i \in [r+1]} \sum_{j \in \left[(i-1)\frac{n}{r+1}+1: i\frac{n}{r+1}\right]} \left(\frac{r+1}{n}   \frac{G^2}{4}  \alpha_i +  \sqrt{r+1}   \alpha_i \lambda \frac{GD}{2n}+ \frac{(\lambda D)^2}{4n}\right) \\
& =   \sum_{i \in [r+1]} \left( \frac{G^2}{4} \alpha_i +    \alpha_i \lambda \frac{GD}{2\sqrt{r+1}} + \frac{(\lambda D)^2}{4(r+1) }\right) \\
& =   \sum_{i \in [r+1]} \left( \frac{G^2}{4}  \alpha_i +    \alpha_i \lambda \frac{GD}{2\sqrt{r+1}} + \frac{(\lambda D)^2}{4(r+1) }\right) \\
& =    \frac{G^2}{4} +    \lambda \frac{GD}{2\sqrt{r+1}} + \frac{(\lambda D)^2}{4 } \\
& =    \frac{G^2}{4} +     \frac{G^2}{2(r+1)} +  \frac{G^2}{4(r+1) } \\
 & \leq G^2.
\end{align*}
\end{proof}

We obtain the following result for $\lambda$-strongly convex functions. 

\thmmainone*
\begin{proof}
Consider  the family of functions $\F^{\lambda}_r$. First, by Lemma~\ref{lem:subgradientsStrongly}, for  for any $f \in \F_r,$ $\bx \in \mathcal{W}$, and $\bg \in \partial f(\bx)$, we have   $\|\bg\|^2 \leq G^2$. Since $f^{\lambda}_{\by} = f_{\by} + \frac{\lambda}{2} \|\bx\|^2$ and $\frac{\lambda}{2} \|\bx\|^2$ is independent from $\by$, $\F^{\lambda}_r$ also satisfies the indistinguishability condition. By Lemma~\ref{lem:gapC}, $\F^{\lambda}_r$ satisfies the $\alpha$-gap conditions  with $\alpha=  \frac{G^2}{\lambda}\left(\frac{1 }{8(r+1)} - \sqrt{\frac{r+ 1}{n}}\frac{\log n}{2}\right)$. By  Theorem~\ref{thm:meta}, we get the desired result.
\end{proof}

\newpage
\bibliographystyle{alpha}
 \bibliography{biblio}

\end{document}